\newtheorem{theorem}{Theorem}[subsection]
\newtheorem{lemma}{Lemma}
\newtheorem{proof}{Proof}
\newtheorem{definition}{Definition}
\newtheorem{assumption}{Assumption}
\DeclareMathOperator*{\argmin}{arg\,min}
\newcommand{\X}{\mathbf{X}}
\newcommand{\Ropt}{R_{f_{{opt}}}}
\newcommand{\Ftheta}{\{f_{\theta},\theta \in \Theta\}}
\newcommand{\topt}{\theta_{opt}}
\newcommand{\gemp}{\hat{\theta}^g_n}
\newcommand{\optemp}{\hat{\theta}^{\fopt}_n}
\newcommand{\Pro}[1]{\Pr\left( #1 \right)}
\newcommand{\fopt}{f_{{opt}}}
\DeclarePairedDelimiter\floor{\lfloor}{\rfloor}
\DeclarePairedDelimiterX{\infdivx}[2]{(}{)}{%
  #1\;\delimsize\|\;#2%
}
\newcommand{\infdiv}{D_{KL}\infdivx}
\newcommand{\infdivv}{\mathcal{D}_{KL}\infdivx}
\begin{document}

\title{Error Exponent in Agnostic PAC Learning} 


\author{%
  \IEEEauthorblockN{Adi Hendel and Meir Feder}
  \IEEEauthorblockA{School of Electrical Engineering\\ 
                    Tel-Aviv University, Tel-Aviv, Israel\\
                    Email: adihendel@mail.tau.ac.il, meir@tau.ac.il}
}


\maketitle


\begin{abstract}
    Statistical learning theory and the Probably Approximately Correct (PAC) criterion are the common approach to mathematical learning theory. PAC is widely used to analyze learning problems and algorithms, and have been studied thoroughly. Uniform worst case bounds on the convergence rate have been well established using, e.g., VC theory or Radamacher complexity. However, in a typical scenario the performance could be much better. In this paper, we consider PAC learning using a somewhat different tradeoff, the error exponent - a well established analysis method in Information Theory - which describes the exponential behavior of the probability that the risk will exceed a certain threshold as function of the sample size. We focus on binary classification and find, under some stability assumptions, an improved distribution dependent error exponent for a wide range of problems, establishing the exponential behavior of the PAC error probability in agnostic learning. Interestingly, under these assumptions, agnostic learning may have the same error exponent as realizable learning. The error exponent criterion can be applied to analyze knowledge distillation, a problem that so far lacks a theoretical analysis.
\end{abstract}

\section{Introduction}
\label{sec:Introduction}
Statistical machine learning studies the generalization ability and convergence rate of learning algorithms. One of the most popular criteria for learnability is the Probably Approximately Correct (PAC) criterion, suggested in \cite{Valiant84,PAC}, which describes the probability of a learning algorithm to output a hypothesis that is not too far from the optimal one. 

In this work, we will consider the class of Empirical Risk Minimization (ERM) predictors, which is the most prominent method for learning problems. ERM predictors choose the hypothesis achieving minimal loss on a given training sample, and their analysis under the PAC criterion is well established through VC theory \cite{StatisticalLearningTheory,vapnik1974theory}. 

Classical setting divides the learning problem into two cases - Realizable learning, in which the target function is taken from the hypothesis class, and Agnostic learning, in which the target function could be outside the class. The general worst-case upper bounds of both cases are well established, see \cite{bousquet2004introduction} for example.

Although VC theory is powerful, it provides a uniform upper-bound for the worst case scenario, where in a typical scenario the convergence rate could be much faster, as suggested by \cite{bousquet2021theory}. Actually, the recent rise of deep learning demonstrate that uniform bounds fail to describe many practical situations and better characteristics comes from considering non-uniform, possibly distribution dependent analysis. 

In this paper we consider agnostic PAC learning for the case of binary labels and 0-1 loss function. We derive an improved distribution-dependent error exponent for the PAC error probability, using some assumptions, for a wide range of learning problems. Moreover, we show that under the specified assumptions, the derived error exponent can be the same for both agnostic and realizable learning.

\subsection{Related Work}
\label{section: Related Work}
 VC theory and the PAC model provide conditions for uniform consistency and bounds, that are achieved, e.g., by ERM predictors \cite{StatisticalLearningTheory}. This theory fails to explain the success of recent learning models, such as neural networks, as presented in \cite{cohn1990can,cohn1992tight}, where practical learning rates can be much faster than the ones predicted by the VC theory. Moreover, in \cite{nagarajan2019uniform} different types of over-parameterized models are analyzed and it is proved that any uniform bound would yield a bad generalization bound. This issue motivated theories that provide better, non-uniform learning rates. 
 
 In this direction, works such as \cite{nitanda2019stochastic,haussler1994bounds,audibert2007fast},
 establish improved bounds for specific cases and algorithms. However, these works do not provide a general theory. \cite{lever2010distribution,lever2013tighter} developed tighter bounds for distribution dependent PAC-Bayes priors. \cite{bendavid2014sample} showed the existence of classes with faster rates than the classical agnostic bound, but the provided condition for such a rate is impractical for infinite feature spaces. Other works relax the uniformity property, as done by \cite{benedek1988nonuniform}, who proposed a relaxed model of PAC in which the bound on the learning rate may depend on a hypothesis, but is uniform on all distributions consistent with that hypothesis.
 Other works focus on totally non-uniform learning bounds. For example, \cite{hanneke2020universal,hanneke2021learning} established a theory for non-uniform consistency, in which an algorithm is considered consistent if it convergence to the optimal risk for any ground truth, and showed there exists such algorithm for separable metric spaces.
 In \cite{bousquet2021theory} a theory for non-uniform PAC learning in the realizable setting is developed showing that the learning rate can be one of 3 types: exponential, linear and arbitrarily slow. 
 
\section{Problem Formulation}
\label{sec:Problem Formulation}
Let the training data be $n$ pairs of data samples and their labels $(x_1, y_1),...,(x_n, y_n)$, where $x_i$ are i.i.d and drawn from a feature space $\X\subseteq\mathbf{R^N}$ according to an unknown distribution $\mathcal{F}$, and the labels $y_i \in \{0,1\}$ are generated by some unknown deterministic function $y_i = g(x_i)$, called the ground-truth function. We have a hypothesis class $F_{\Theta}=\{f_\theta, \theta \in \Theta\}$, and would like to find the closest hypothesis to the ground truth in the class under some loss function.
We focus on the setting in which the hypotheses range is binary and the loss function is the 0-1 loss:
\[
    \ell(y_a, y_b) = \begin{cases}
       0 &\quad y_a = y_b\\
       1 &\quad y_a \neq y_b
     \end{cases}
\]
The following notations are with regard to some arbitrary function $f$, where $f$ can be the ground truth or some other function in discussion.
For hypothesis class $F_{\Theta}$, denote the risk between hypothesis $f_{\theta} \in F_{\Theta}$ and some function $f(x)$ as:
\begin{align}
    R_f(\theta)= R(f,f_{\theta})= \int_{\X}\ell(f(x),f_{\theta}(x))d\mathcal{F}(x) \quad ,\theta \in \Theta
\end{align}
and the empirical risk between $f_{\theta}$ and $f(x)$ on sample $x^n$ as:
\begin{align}
    R_f^{emp}(\theta, x^n) = \frac{1}{n}\sum_{i=1}^n \ell(f(x_i),f_{\theta}(x_i))  \quad, \theta \in \Theta
\end{align}
In this paper, we analyze the Empirical Risk Minimization (ERM) algorithm, which selects the hypothesis that minimizes the empirical risk on the sample $x^n$ out of all the hypotheses in the class. Specifically, the ERM on a sample $x^n$ and hypothesis class $\Theta$, with regard to a function $f(x)$, is defined as: 
\begin{align}
    \hat{\theta}^f_n = \hat{\theta}^f(x^n) = \argmin_{\theta \in \Theta}R_f^{emp}(\theta,x^n)
\end{align}
Whenever there are multiple hypotheses with the same minimal empirical risk, we use the convention of choosing the one maximizing the true risk (i.e., the worst one).

Denote the hypothesis achieving minimum risk with regard to the ground truth $g$ as $\topt$:
\begin{align}
    \topt = \argmin_{\theta \in \Theta}R_g(\theta), \quad
    \fopt = f_{\topt}
\end{align}
We will refer to $\fopt$ as the projection of $g$ on $F_{\Theta}$.
We assume for simplicity that the hypothesis class is non-degenerate in the sense that there is no subset of the feature space $X^{'} \subseteq \X$ for which all hypotheses coincide. That is,  for any set $\X^{'} \subseteq \X$ with positive probability, we have:
\begin{align}\label{degenerate condition}
    \Pro{f_{\theta}(x) = \text{const} \ \forall  \theta \in \Theta \mid x \in X^{'}} = 0 
\end{align}
This is not restrictive as any part of the feature space on which all hypotheses coincide will contribute the same risk to all hypotheses, thus not affecting the choice of ERM. We'll also assume that for any positive (Lebesgue) measure set the probability measure is positive (this is non restrictive as such regions with zero probability have no effect).
\subsection{PAC Learning}
In the context of ERM, we say that the class $F_\Theta$ is (agnostic) PAC learnable \cite{shalev2014understanding} if there exist a sample size $N(\epsilon,\delta)$ and an algorithm $\hat\theta_n^{g}$ such that for every ground truth function $g(x)$, every probability distribution $\mathcal{F}$ on $\X$ and every $\delta,\eta \in (0,1)$, for $n>N$, with probability at least $1-\eta$ we have:
\begin{align}
    R_g(\hat\theta_n^{g}) < R_g(\theta_{opt}) + \delta.
\end{align}
PAC actually describes the relationship between three quantities: the deviation from the optimal risk $\delta$, the probability $\eta$ for deviation larger than $\delta$ and the size of the sample $n$. We will refer to $\eta$ as the PAC error probability for shortness.

The analysis of learning algorithms using PAC is usually done by writing one parameter as a function of the other two. Most notably, writing $n$ as a function of $\eta$ and $\delta$ (known as sample complexity) or writing $\delta$ as function of $n$ for some fixed value of $\eta$ (known as excess risk). In this way, we can say one algorithm is better than the other if, for example, it has a better sample complexity (i.e., $n$ increases slower as a function of $\frac{1}{\delta}$ for a fixed $\eta$). We propose to fix $\delta$ and to look instead at the probability of deviation $\eta$ as function of the sample size $n$. In this case, we say that one algorithm is better than the other if $\eta$ decays faster as a function of $n$.

\subsection{VC Theory}
VC theory \cite{StatisticalLearningTheory} provides consistency conditions and uniform (worst-case) bounds for PAC learning of ERM predictors. This is done using the VC dimension of the hypothesis class, denoted $h$, defined as the maximum sample size $h$ for which the sample can be separated into two classes in all $2^h$ possible label sequences, using functions from the hypothesis class.

VC theory provides the following well known results for ERM (see \cite{bousquet2004introduction} for example). In agnostic learning, for $n>h$, with probability at least $1-\eta$ we have:
\begin{equation}\label{VC agnostic2}
\begin{aligned}
    R_g(\gemp)-R_g(\topt) \leq 4\sqrt{2\frac{h\ln{\frac{2en}{h}} + \ln{\frac{2}{\eta}}}{n}}
\end{aligned}
\end{equation}
In realizable learning, for $n>h$, with probability at least $1-\eta$:
\begin{equation}\label{VC realizable}
\begin{aligned}
    R_g(\gemp) \leq 4\frac{h(\ln{\frac{2ne}{h}}) - \ln{\frac{\eta}{4}}}{n}
\end{aligned}
\end{equation}
These bounds describe the generalization and convergence of the ERM predictor.

Focusing on (\ref{VC agnostic2}), we can get $\eta$ as a function of $\delta$ and $n$ by setting the right hand side to $\delta$:
\begin{align}\label{agnostic bound}
    \Pro{R_g(\gemp) - R_g(\theta_{opt})> \delta} \leq 2e^{-\frac{\delta^2}{32}n + h\ln{\frac{2en}{h}}}
\end{align}
Similarly, we get the following for the realizable case:
\begin{align}\label{realizable bound}
    \Pro{R_g(\gemp) - R_g(\theta_{opt})> \delta} \leq 4e^{-\frac{\delta}{4}n + h\ln{\frac{2en}{h}}}
\end{align}
This formulation of the upper bound shows that the PAC error probability $\eta$ decays exponentially with $n$ and allows us to explore its error exponent. Recall that the error exponent $d$ of a series $a_n$ is defined as:
\begin{align}
    d = -\lim_{n \to \infty} \frac{1}{n}\ln{a_n}
\end{align}
We will use the notation $a_n \doteq b_n$ to indicate that series $a_n$ has the same error exponent as $b_n$. The concept of error exponent (see section 5.6 in \cite{gallager1968information} for example), was proven useful in Information Theory for analyzing the decay rate of probabilities to zero. It allows utilizing powerful mathematical tools such as the method of types \cite{mot} and Sanov's theorem \cite{sanov1961probability}. We can see from (\ref{agnostic bound}) that the error exponent in the agnostic case is $\frac{\delta^2}{32}$, and from (\ref{realizable bound}) that the error exponent in the realizable case is $\frac{\delta}{4}$. We note that the bound in (\ref{VC agnostic2}) can be manipulated using a chaining
technique \cite{anthony1999neural} to get rid of the $\ln{\frac{2en}{h}}$ factor, but the resulting error exponent will be worse. 
In the next sections, we will derive an improved distribution-dependent bound for the PAC error probability
   $\eta= \Pro{R_g(\gemp) - R_g(\theta_{opt})> \delta}$ for the agnostic case.
This will be done using some assumption on the learning problem (i.e., on the hypothesis class and the ground truth) described in the next sections. Under these assumptions the error exponent in the agnostic case can be the same as in the realizable case for small enough $\delta$.

\section{Preliminaries}\label{section: Preliminaries}
In this section we introduce a few key concepts. In order to provide some intuition, we will use the k-boundary hypothesis class as a case study and demonstrate these concepts on it.
\label{sec:Definition}
\begin{definition}[k-boundary hypothesis class]\label{def k-boundary hypothesis class}
Let $X \subseteq \mathbf{R}$. The k-boundary hypothesis set is defined as
\begin{align*}
f_{b_1,...,b_k}(x) = \begin{cases}
       0 &\quad x < b_1\\
       1 &\quad b_1 \leq x < b_2 \\
       0 &\quad b_2 \leq x < b_3 \\
       ... \\
       1 &\quad b_k \leq x  \\
     \end{cases}
\end{align*}
Where $b_1 \leq b_2 \leq ... \leq b_k, \quad x,b_1,...,b_k\in X $. For uniqueness, equality is allowed only between the first 2 parameters or between last parameters (e.g., $b_1 = b_2 < b_3 < ...< b_{k-2}=b_{k-1}= b_k$).
\end{definition}

For example, on the feature space $X = [0,1]$ with uniform distribution, the 2-boundary function with parameters $b_1 = 0.5, b_2 = 0.9$ is
\begin{align*}
    g(x) = f_{b_1,b_2}(x) = \begin{cases}
       0 &\quad 0 \leq x < 0.5\\
       1 &\quad 0.5 \leq x < 0.9 \\
       0 &\quad 0.9 \leq x \leq 1 \\
     \end{cases}
\end{align*}
We will use this example throughout this section to demonstrate the presented concepts.
Another important hypothesis class, which is more closely related to neural networks, is the class of linear classifiers:
\begin{definition}[linear hypothesis class]\label{def: linear hypothesis class}
Let there be a feature space $X \subseteq \mathbf{R^k}$. The k-dimensional linear hypothesis set is:
\begin{align*}
f_{b_0,...,b_{k}}(x) = \mathbf{1}(b_0 +b_1 x_1 + ... + b_{k}x_k > 0 ) 
\end{align*}
Where $(b_0,...,b_k) \in \mathbf{R}^{k+1}$, $(x_1,...,x_k) \in X$ and $\mathbf{1}\{.\}$ is the indicator function.
\end{definition}
\begin{definition}[Generalized Optimum Point]\label{def:glp}
For hypothesis class $\Ftheta$ and ground truth function $g(x)$, we say that $\theta \in \Theta$ is a generalized optimum point (GLP) of \(R_g(\theta)\) if $\forall  \Tilde{\theta}\in \Theta\setminus\theta$ there exists a set $\Tilde{X} \subseteq \X$ with positive probability, such that $\forall x \in \Tilde{X}$ we have $\ell(f_\theta(x),g(x)) < \ell(f_{\Tilde{\theta}}(x),g(x))$.
\end{definition}

In simple words, $\theta$ is a GLP if no other hypothesis can beat it uniformly on the feature space $\X$. Notice that the hypothesis $\topt$ minimizing the risk is always a GLP, as for every other hypothesis in the class there must exist a set for which $\topt$ is uniformly better, otherwise it would not be the minimizer of the risk. We will refer to $\topt$ as the global optimum. 

Consider for example the 1-boundary hypothesis class with a ground truth $g$ as described above. The GLP's will be $\theta_0 = 0.5$ and $\theta_1 = 1$, as no other hypothesis $\theta \in [0,1]$ achieves a lower loss for all $x \in \X$. Notice that these are the only GLP's since any other hypothesis $\theta$ is no better (for all $x$) than either $\theta_0$ or $\theta_1$. This divides the parameter space into two groups: hypotheses that are no better than $\theta_0$ and hypotheses that are no better than $\theta_1$.
We can informally say that when an ERM learns from $g$ using the 1-boundary hypothesis class, there is going to be a competition between these 2 groups. The following definition generalizes this concept. 

For each GLP $\theta^{*}$, denote the set $A_{\theta^{*}} \subseteq \Theta$:
\begin{definition}[$A_{\theta}$ region]\label{def:A_region}
Let $\Theta_{opt}$ be the set of GLP's of $R_g(\theta)$ and $\theta_0 \in \Theta_{opt}$ be the global optimum. For every $\theta^* \in \Theta_{opt}$ denote the regions:
\begin{equation}
\begin{aligned}
    \Tilde{A}_{\theta^*} =
    \{\theta \in \Theta \mid \Pro{\ell\big(f_{\theta^*}(x), g(x)\big) \leq \ell\big(f_{\theta}(x), g(x)\big)}=1  \}
\end{aligned}
\end{equation}
In order to make these regions disjoint, we handle the intersections in the following way:
\begin{enumerate}
    \item remove all overlaps from $\Tilde{A}_{\theta_0}$:\\
    \[A_{\theta_0} = \Tilde{A}_{\theta_0} \setminus \bigcup_{\theta^* \in \Theta_{opt} ,\ \theta^* \neq \theta_0}A_{\theta^*}\]
    \item For the other regions $\Tilde{A}_{\theta^*}, \theta^* \neq \theta_0$, arbitrarily assign the intersection to one of the regions such that there will not be any overlap, to obtain the regions $A_{\theta^*}$
\end{enumerate}
\end{definition}
These regions form a complete partitioning of $\Theta$ such that $\cup_{\theta^* \in \Theta_{opt}} A_{\theta^*} = \Theta$
(see proof in appendix \ref{appendix perlimineries}).

In simple words, $A_{\theta^*}$ is the set of hypotheses in $\Theta$ that are no better than the GLP $\theta^*$ for any given $x \in X$ (with probability 1). For the example above we have the sets $A_{\theta_0} = (0,0.9)$ and $A_{\theta_1} = (0.9,1)$.

Note that in any (non-degenerate) agnostic learning problem we will have at least 2 GLP's, because if $g$ is outside the class, there must be a set in $\X$ with positive probability for which $g$ is different than $\fopt$. Any hypothesis equal to $g$ on this set will be universally better than $\fopt$ on this set, and will not belong to 
$A_{\theta_0}$ (this is a consequence of (\ref{degenerate condition})). Thus, there must be other GLP's in addition to $\theta_0$. 

\begin{definition}[Dominating region]\label{def:D_region}
For hypothesis class $\Ftheta$, the Dominating region of $\theta_a$ on $\theta_b$, where $\theta_a, \theta_b \in \Theta$, with regard to $g(x)$, denoted as $D(\theta_a, \theta_b) \subseteq \X$, is defined as
\begin{equation}
\begin{aligned}
&D(\theta_a, \theta_b) = \\
&\big\{x \in \X \mid \Pro{\ell\big(f_{\theta_a}(x), g(x)\big) < \ell\big(f_{\theta_b}(x), g(x)\big)} = 1\big\}
\end{aligned}
\end{equation}

\end{definition}
The dominating region is the set in the feature space for which $\theta_a$ achieves lower loss than $\theta_b$ (i.e., $f_{\theta_a} = g$, $f_{\theta_b} \neq g$). Using our example, the dominating region of $\theta_0$ on $\theta_1$ is $D(\theta_0, \theta_1) = (0.5, 0.9)$.

\begin{definition}[Stability]\label{def:stability}
We say that a GLP $\theta^*$ is stable if we can define a distance in $\Theta$ and there exist $\epsilon > 0$ such that for every $\theta$ with distance $||\theta - \theta^*|| < \epsilon$ the following holds with probability 1:
\begin{equation}
\begin{aligned}
\ell(f_{\theta^*}(x),g(x)) \leq \ell(f_{\theta}(x),g(x)) 
\end{aligned}
\end{equation}
\end{definition}
Informally, $\theta^*$ is a stable GLP if any hypothesis in its neighborhood does not have an improved classification ability with regard to any $x \in \X$. Using our example from above, both $\theta_0$ and $\theta_1$ are stable.
\section{Theoretical Results}\label{section: Theoretical Results}
We consider the following assumptions.

\begin{assumption}\label{assumption stable}
$f_{opt}$ is a stable GLP.
\end{assumption}
\begin{assumption}\label{assumption unique}
$f_{opt}$ is a unique GLP.
\end{assumption}
\begin{assumption}\label{assumption consistent}
    The following is true in probability:
     \[ \lim_{n \to \infty} \hat{\theta}^g_n = \topt \]
\end{assumption}
\begin{assumption}\label{assumption complete}
$F_\Theta$ is a complete space. i.e., the limit of any Cauchy sequence in $\Theta$ is also in $\Theta$, and the limit of any sequence $f_{\theta_m} \in F_{\Theta}$, such that the series $\int_{\X}\ell(f_{\theta_m}(x),f_{\theta_{m+1}}(x))d\mathcal{F}(x)$ has a limit, is also in $F_{\Theta}$.
\end{assumption}
\begin{assumption}\label{assumption delta}
$0 < \delta < \delta_{max}$, 
where $\delta_{max}$ is denoted as:
\begin{align}\label{delta_max}
    \delta_{max} = \min\{\min_{\theta \notin A_{\theta_{0}}}\Ropt(\theta) ,\min_{\theta \notin A_{\theta_{0}}}R_g(\theta) - R_g(\theta_{opt})\}
\end{align}
\end{assumption}
Assumption \ref{assumption unique} is needed mainly to ease the analysis and can be generalized. Relaxing this assumption will cause the ERM to alternate between multiple 
 $A_\theta$ regions of the (non-unique) global GLP's. Assumption \ref{assumption consistent} is a non-uniform consistency requirement (this is weaker than finite VC dimension), which is reasonable. Assumption \ref{assumption complete} is mainly a mathematical technicality. Assumption \ref{assumption delta} means that we are looking at what happens for small enough $\delta$, which is reasonable as we are interested in the behavior in the asymptotic regime. Assumption \ref{assumption stable} is the only one that poses a significant constraint. Nevertheless, a wide range of learning problems satisfy it, such as $k$-boundary class with a ground truth with finite number of transition points (see proof in appendix \ref{appendix: K-boundary} ). It is also satisfied by some cases of linear classifiers, which are more closely related to neural networks. 
An example of a ground truth that satisfies these assumptions, using a 2-dimensional linear hypothesis set is shown in Figure \ref{fig:linear} where the optimal linear hypothesis is $\mathbf{1}(x_2 > 1.3-x_1)$, and any change in its parameters will result in mis-classification of more features, thus it is a stable GLP.
\begin{figure}[htbp]
\centering
\includegraphics[width=0.4\textwidth]{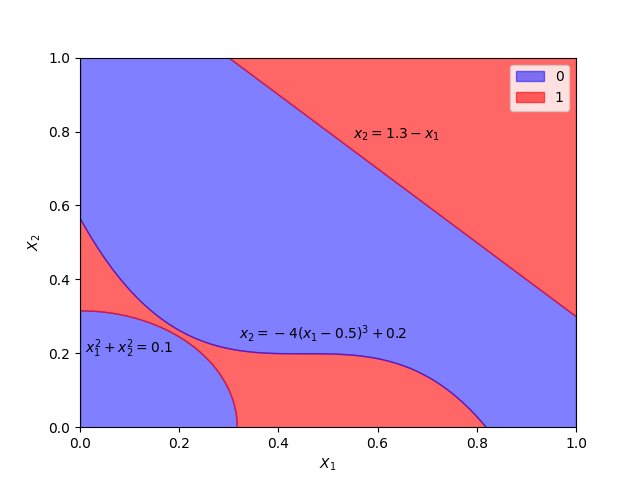}
\caption{Ground truth with stable optimal 2-dimensional linear hypothesis (see definition \ref{def: linear hypothesis class}). $x_1$ and $x_2$ are uniformly distributed in $[0,1]$. The optimal hypothesis is achieved by $b_0 =-1.3,\ b_1=1,\ b_2=1$.}
\label{fig:linear}
\end{figure}

We can now move to state our main results.
\begin{theorem}\label{theorem: realizable+agnostic}
    Given a hypothesis class \(\Ftheta\), and ground truth function $g$ with projection $\fopt$ on the hypothesis class, the following holds under assumptions \ref{assumption stable}-\ref{assumption delta}:
    \begin{equation}\label{realizable+agnostic}
    \begin{aligned}
        &\Pro{R_g(\gemp) - R_g(\theta_{opt})> \delta} = \\
        &P_R +(1-P_R)\Pro{\gemp \notin A_{\theta_0} \mid \Ropt(\optemp) < \delta}
    \end{aligned}
    \end{equation}
\end{theorem}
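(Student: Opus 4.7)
Set $P_R := \Pr(\Ropt(\optemp) \geq \delta)$, which I interpret as the PAC error probability of the associated realizable problem of learning $\fopt$ from the same sample. The plan is to apply the law of total probability conditioned on $\{\Ropt(\optemp) \geq \delta\}$ versus its complement, reducing the claim to the two sub-claims (I) $\Pr(R_g(\gemp)-R_g(\topt) > \delta \mid \Ropt(\optemp) \geq \delta) = 1$ and (II) $\Pr(R_g(\gemp)-R_g(\topt) > \delta \mid \Ropt(\optemp) < \delta) = \Pr(\gemp \notin A_{\theta_0} \mid \Ropt(\optemp) < \delta)$.

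First I would prove three preparatory identities that drive everything. For $\theta \in A_{\theta_0}$, the a.s.\ pointwise characterization from Definition~\ref{def:A_region} together with the binary-label structure forces $f_\theta(x) = \fopt(x)$ wherever $\fopt(x) \neq g(x)$; splitting the $0$-$1$ loss on this set and its complement then yields $R_g(\theta) - R_g(\theta_0) = \Ropt(\theta)$ at the population level and $R_g^{emp}(\theta, x^n) - R_g^{emp}(\theta_0, x^n) = R_{\fopt}^{emp}(\theta, x^n)$ at the empirical level. More generally, every $\theta$ with $R_{\fopt}^{emp}(\theta, x^n) = 0$ (write $\Theta_0(x^n)$ for this set) satisfies $R_g^{emp}(\theta, x^n) = R_g^{emp}(\theta_0, x^n)$, since $f_\theta$ agrees with $\fopt$ on every sampled point. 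Finally, Assumption~\ref{assumption delta} supplies a safety margin: $\min\{R_g(\theta) - R_g(\theta_0),\ \Ropt(\theta)\} \geq \delta_{max} > \delta$ for every $\theta \notin A_{\theta_0}$.

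For sub-claim (I), suppose $\Ropt(\optemp) \geq \delta$. If $\gemp \notin A_{\theta_0}$ the safety margin alone forces $R_g(\gemp) - R_g(\theta_0) > \delta$. Otherwise $\gemp \in A_{\theta_0}$, and the empirical identity combined with ERM optimality $R_g^{emp}(\gemp, x^n) \leq R_g^{emp}(\theta_0, x^n)$ forces $R_{\fopt}^{emp}(\gemp, x^n) = 0$; then since $\optemp \in \Theta_0(x^n)$ has $R_g^{emp}(\optemp, x^n) = R_g^{emp}(\theta_0, x^n) = R_g^{emp}(\gemp, x^n)$, the hypothesis $\optemp$ is also an empirical minimizer of $R_g^{emp}$, so the worst-case tiebreaking yields $R_g(\gemp) \geq R_g(\optemp) \geq R_g(\theta_0) + \delta$ (via the risk identity if $\optemp \in A_{\theta_0}$, the safety margin otherwise). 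For sub-claim (II), condition on $\Ropt(\optemp) < \delta$, which by the safety margin puts $\optemp \in A_{\theta_0}$. On $\{\gemp \notin A_{\theta_0}\}$ the event holds as before. On $\{\gemp \in A_{\theta_0}\}$ the same argument places $\gemp \in \Theta_0(x^n)$; moreover any empirical $R_g^{emp}$-minimizer $\theta^{*} \notin A_{\theta_0}$ would give $R_g(\theta^{*}) \geq R_g(\theta_0) + \delta_{max}$, strictly exceeding every $R_g(\theta) \leq R_g(\theta_0) + \Ropt(\optemp) < R_g(\theta_0) + \delta_{max}$ attainable in $A_{\theta_0} \cap \Theta_0(x^n)$, so the tiebreaking would pick $\theta^{*}$ and not place $\gemp$ in $A_{\theta_0}$---a contradiction. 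Hence every $R_g^{emp}$-minimizer lies in $A_{\theta_0} \cap \Theta_0(x^n)$, on which maximizing $R_g$ coincides with maximizing $\Ropt$ (by the risk identity), so $\gemp$ inherits $\optemp$'s true risk and $R_g(\gemp) - R_g(\theta_0) = \Ropt(\optemp) < \delta$, i.e., the event fails.

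The step I expect to be the main obstacle is the tiebreaking coupling inside $A_{\theta_0}$: specifically, showing that on $\{\gemp \in A_{\theta_0}\}$ every empirical minimizer of $R_g^{emp}$ must also lie in $A_{\theta_0} \cap \Theta_0(x^n)$, so that the worst-case tiebreaking for $\gemp$ selects the same hypothesis as for $\optemp$. This is the one place where the a.s.\ pointwise structure of $A_{\theta_0}$ alone is not sufficient and the $\delta_{max}$-safety margin of Assumption~\ref{assumption delta} is essential to rule out stray minimizers outside $A_{\theta_0}$; boundary cases such as $\Ropt(\optemp)$ landing exactly at $\delta$ are handled separately, e.g., by perturbing $\delta$.
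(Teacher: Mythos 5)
Your proposal is correct and takes essentially the same route as the paper's own proof: your conditioning on the realizable event is the paper's decomposition, your population/empirical identities $R_g(\theta)-R_g(\theta_0)=\Ropt(\theta)$ and $R_g^{emp}(\theta,x^n)-R_g^{emp}(\theta_0,x^n)=R_{\fopt}^{emp}(\theta,x^n)$ on $A_{\theta_0}$ correspond to its equivalence lemma, and your sub-claims (I) and (II) reproduce its two conditional lemmas via the same ingredients (the $\delta_{max}$ margin of Assumption \ref{assumption delta} and the worst-case tiebreaking coupling $\gemp$ with $\optemp$ on $\Theta_0(x^n)$). The only looseness is the boundary event $\Ropt(\optemp)=\delta$ ($\geq$ versus $>$ in defining $P_R$), which the paper's own proof neglects as well.
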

where $P_R=\Pro{\Ropt(\optemp) > \delta}$ is the realizable PAC error probability when learning from $\fopt$ (see proof in appendix \ref{appendix: proof 1}).
This theorem decomposes the PAC error probability into the error incurred in realizable learning and the additional error incurred in agnostic learning. Notice that for realizable learning, $g \in F_{\Theta}$, we get $\Pro{\Ropt(\gemp) > \delta} = P_R$, as expected.

Denote the KL divergence projection of a distribution $\Tilde{Q}$ on a set of distributions $\Tilde{\Pi}$ as:
\begin{align}
    \infdiv{\Tilde{\Pi}}{\Tilde{Q}} = \inf_{P \in \Tilde{\Pi}} \infdivv{P}{\Tilde{Q}} 
\end{align}
where $\infdivv{P}{\Tilde{Q}}$ is the KL divergence.
\begin{theorem}\label{theorem: error exponent}
Under assumptions \ref{assumption stable} - \ref{assumption delta}, if $\Theta$ has a finite VC dimension, there exists a positive real number $d \in \mathbf{R}^+$ such that the following holds:
    \begin{align}
    \Pro{R_g(\gemp) - R_g(\theta_{opt})> \delta} \doteq e^{-n\cdot \min\{\frac{\delta}{4}, d\}}
\end{align}
where $d = \infdiv{\Pi}{Q}$, $\Pi$ is a set of distributions on some alphabet $\chi$, induced by the distribution on $\X$ for which the ERM will output a hypothesis outside of $A_{\theta_0}$ and $Q$ is the true distribution on the alphabet. 
\end{theorem}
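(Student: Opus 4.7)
The plan is to work from the decomposition of Theorem \ref{theorem: realizable+agnostic}, which splits the PAC error probability into a purely realizable component $P_R = \Pro{\Ropt(\optemp) > \delta}$ and a conditional agnostic component $\Pro{\gemp \notin A_{\theta_0} \mid \Ropt(\optemp) < \delta}$. Since the error exponent of a sum of non-negative sequences equals the minimum of their individual exponents (and the prefactor $(1-P_R)$ is bounded in $[0,1]$), the task reduces to identifying the exponent of each term separately and taking their minimum.

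For the realizable component, the upper bound from (\ref{realizable bound}) immediately gives an exponent of at least $\delta/4$, since $h \ln(2en/h)$ is sub-linear after normalizing by $n$. To obtain a matching lower bound on the exponent I would invoke the stability of $\fopt$: because $\fopt$ is a stable GLP and $\delta < \delta_{max}$, there exist competing hypotheses in any neighborhood of $\topt$ whose true risk with respect to $\fopt$ is of order $\delta$ and which can beat $\optemp$ empirically exactly when a specific binomial tail event occurs. A Chernoff-type lower bound on this tail event supplies the matching exponent, establishing that $P_R \doteq e^{-n\delta/4}$ within the theorem's conventions.

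For the conditional agnostic component, the plan is to reduce the event $\{\gemp \notin A_{\theta_0}\}$ to a large-deviation event about the empirical type of a random variable on a finite alphabet. Under Assumptions \ref{assumption stable}--\ref{assumption unique}, competitors in alternative $A_{\theta^*}$ regions can only out-score $\optemp$ empirically by exploiting the dominating regions $D(\theta_0, \theta^*)$ and $D(\theta^*, \theta_0)$ (Definition \ref{def:D_region}). Using the finite VC dimension together with Sauer--Shelah, the continuum of hypotheses collapses to an effective polynomial (in $n$) count, so one may partition $\X$ by the joint loss pattern of a finite set of extremal competitors, producing an alphabet $\chi$ on which the empirical joint type of the sample is well defined and with induced true distribution $Q$. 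The set $\Pi$ is then identified as those empirical types under which some hypothesis outside $A_{\theta_0}$ has strictly smaller empirical risk than $\optemp$. Sanov's theorem then yields the exponent $\infdiv{\Pi}{Q}$, and Assumption \ref{assumption delta} ($\delta < \delta_{max}$) guarantees $d > 0$ since $Q$ lies strictly outside the closure of $\Pi$: every hypothesis outside $A_{\theta_0}$ has true risk bounded away from $R_g(\topt)$ by at least $\delta_{max} > \delta$, so achieving the event requires a genuine deviation of the empirical distribution away from $Q$.

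The main obstacle will be the Sanov step: justifying that the continuous hypothesis class $\Theta$ reduces to a finite-alphabet large-deviation problem without distorting the exponent, and showing that the conditioning on $\{\Ropt(\optemp) < \delta\}$ is exponentially negligible. The first issue is handled by Sauer--Shelah plus a careful covering argument exploiting stability of $\fopt$, so that only a bounded number of candidate hypotheses in each $A_{\theta^*}$ region matter asymptotically, while Assumption \ref{assumption complete} ensures limiting hypotheses remain in the class. The second follows because, by Assumption \ref{assumption consistent} and the realizable analysis above, $\Pro{\Ropt(\optemp) < \delta} \to 1$ with at most exponentially small complementary probability, so conditioning changes probabilities by a factor negligible in the exponent. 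Combining the two exponents via the minimum then yields the stated exponent $\min\{\delta/4, d\}$.
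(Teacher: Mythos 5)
Your high-level plan follows the paper (decompose via Theorem~\ref{theorem: realizable+agnostic}, take the minimum of the two exponents), but two of the steps you rely on would not go through as sketched. First, the finite-alphabet reduction: Sauer--Shelah controls the number of labelings of a \emph{given} $n$-point sample, not the number of cells of a fixed partition of $\X$, so your ``effective polynomial count of extremal competitors'' does not produce a fixed alphabet $\chi$ with an $n$-independent distribution $Q$ to which Sanov's theorem can be applied; the partition you describe changes with $n$. The paper does not use VC machinery here at all --- finite VC dimension enters only through the realizable bound (\ref{realizable bound}). Instead it exploits the GLP structure: since every $\theta\in A_{\theta_i}$ is pointwise no better than the GLP $\theta_i$, the event $\{\gemp\notin A_{\theta_0}\}$ is \emph{exactly} $\cup_{i}\{\#D_i\le \#D_i^{'}\}_n$ over the (finitely many) GLPs, and the alphabet is the fixed disjointification of the dominating regions in (\ref{D_pairs})--(\ref{X_region}), with $Q$ their true probabilities and $\Pi$ as in (\ref{eq: pi}). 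Second, the conditioning on $\{\Ropt(\optemp)<\delta\}$ cannot be dismissed by noting its probability tends to one: both events are exponentially rare or strongly correlated, and the naive bound $\Pro{A\cap B}\ge \Pro{A}-\Pro{B^c}$ gives no matching lower bound on the conditional exponent whenever the realizable exponent is not larger than $d$. The paper needs a separate idea here (Lemma~\ref{lemma: bounds}): the condition $\Ropt(\optemp)<\delta$ constrains at most a fixed number $\ell$ of samples, which sandwiches the conditional probability between $\Pro{\cup_i\{\#D_i+\ell\le \#D_i^{'}\}_{n-\ell}}$ and $\Pro{\cup_i\{\#D_i\le \#D_i^{'}\}_n}$; Lemmas~\ref{pi}, \ref{pi_ell} and \ref{lem_topology} then show both sides have the same exponent $\infdiv{\Pi}{Q}$, using that the sets $\Pi_{n,\ell}$ of (\ref{eq:pi_n_ell}) converge to $\Pi$.

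On the realizable term, your claim that $P_R\doteq e^{-n\delta/4}$ with a matching Chernoff-type lower bound is both unnecessary and incorrect in general: the constant $\delta/4$ comes from the uniform worst-case VC bound (\ref{realizable bound}), while the true realizable exponent is distribution dependent and typically larger (the paper explicitly remarks that any improved realizable bound can be plugged into Theorem~\ref{theorem: error exponent}, which would be vacuous if the exponent were exactly $\delta/4$); moreover, a binomial-tail lower bound against a fixed competitor of risk about $\delta$ would yield an exponent near $-\ln(1-\delta)\approx\delta$, not $\delta/4$. The paper only uses (\ref{realizable bound}) as an upper bound on $P_R$ and combines it with the exact exponent $d$ of the conditional term, which is what the statement requires.
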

The proof of the Theorem is provided in appendix \ref{appendix: proof theorem 2}. The distribution $Q$ and the set of distributions $\Pi$ will be explicitly derived in the next section.  

This theorem establishes the exponential behavior of the PAC error probability and is achieved by showing that the error exponent of $\Pro{\gemp \notin A_{\theta_0} \mid \Ropt(\optemp) < \delta}$ is $\infdiv{\Pi}{Q}$ and using the uniform realizable learning bound in (\ref{realizable bound}). This implies that any improved realizable bound can be plugged into theorem \ref{theorem: error exponent} to get an improved agnostic bound, and the requirement of finite VC dimension might be unnecessary. 

The achieved error exponent $\min(\frac{\delta}{4}, d)$ is better than the classical error exponent for agnostic learning in (\ref{agnostic bound}), which is $\frac{\delta^2}{32}$, as it is linear in $\delta$ instead of quadratic in $\delta$.

Notice that because $d$ is independent of $\delta$, for $\delta < \min\{4d, \delta_{max}\}$ the error exponent is $\frac{\delta}{4}$, which is the same as the worst case {\em realizable} learning exponent. Thus, not only the error exponent is much better than the general one for agnostic learning, it also shows that agnostic learning might be no harder than realizable learning in some cases. This result can be expressed as a bound on the excess risk:
\begin{align}
    \delta = O(\frac{1}{n}\ln{\frac{1}{\eta}})
\end{align}

\subsection{Derivation of Error Exponent}
\label{sec:Analysis of PAC Criterion}
In this section we provide details on how to construct
the set $\Pi$ and the distribution $Q$.
The derivation in this section is partial and is done under the assumption that there are K+1 GLP's. However, this is only to simplify the already complex derivation and is not a requirement (see appendices \ref{appendix: PAC analysis} and \ref{appendix:generalization} for more details).
$\topt$ must be one of the $K+1$ GLP's. Denote $\theta_0 = \theta_{opt}, \ f_{\theta_{0}} = \fopt$.
For each GLP $\theta_i\ , i=1,...,K$ of $R_g(\theta)$, denote the following regions:
\begin{equation}\label{D_pairs}
    \begin{aligned}
         D_{i} = D(\theta_{0},\theta_{i}), \
         D^{'}_{i} = D(\theta_{i}, \theta_{0})
    \end{aligned}
\end{equation}
$D_i$ is the region that supports choosing $\theta_0$ over $\theta_i$ and $D^{'}_i$ is the opposite. Denote $\#D$ as the number of samples that fall in region $D$. $\theta_0$ achieves lower empirical risk than $\theta_i$ if $\#D^{'}_i < \#D_i$.
Denote the disjointified regions of $D_i,D^{'}_i$:
\begin{equation}\label{X_region}
\begin{aligned}
    &X_{i_1,...,i_r}= \{\cap_{j \in \{i_1,...,i_r\} } D_{j}\}\symbol{92}\{\cup_{j \notin \{i_1,...,i_r\} } D_j\} \\
    &X^{'}_{i_1,...,i_r}= \{\cap_{j \in \{i_1,...,i_r\} } D^{'}_{j}\}\symbol{92}\{\cup_{j \notin \{i_1,...,i_r\} } D^{'}_j\} \\
    &X_c= X\symbol{92}\{\cup_{j}{D^{'}_j}\cup_{i}{D_i}\},
\end{aligned}
\end{equation}
where $r=1,..,K$ and $1 \le i_r\le K$. Notice these regions are non-intersecting and \(D_i = X_{i}\cup\{\cup_{i_2}X_{i,i_2}\}\cup .. \cup \{\cup_{i_2,..i_K}X_{i,i_2,..i_K}\}\). We can write the following equation:
\begin{equation}\label{eq: constraint matrix}
    \begin{aligned}
        \begin{pmatrix}
        \#D_1 - \#D^{'}_1\\
        ...\\
        \#D_K - \#D^{'}_K\\
        \end{pmatrix} = 
        A\begin{pmatrix}
        \#X_1\\
        ...\\
        \#X_{1,..,K}\\
        ...\\
        \#X^{'}_{1,..,K}
\end{pmatrix}
    \end{aligned}
\end{equation}
where A is a matrix.
Denote the alphabet $\chi$:
\begin{equation}\label{alphabet}
\begin{aligned}
    &\chi = \{X_1,..,X_{1,..,k}, X^{'}_1,..,X^{'}_{1,..,k}, X_c\} = \{a_1, ..., a_{|\chi|}\}\\ 
\end{aligned}
\end{equation}
Denote the probability mass function $Q$ on $\chi$ such that $ Q(a_i) = \Pro{x \in S_i}$, $a_i \in \chi$,
where $a_i$ is the i'th symbol in $\chi$ and $S_i$ is the region corresponding to it. 
Denote $\Pi$:
\begin{equation}\label{eq: pi}
\begin{aligned}
    &\Pi = \Big\{(p_1,..,p_{|\chi|}) \mathrel{\Big|} 
    \Big\{A\begin{pmatrix}
p_1\\
...\\
p_{|\chi|-1}
\end{pmatrix} < 0 \Big\}^c,\sum_{i=1}^{|\chi|}p_i=1, p_i\ge0\Big\}
\end{aligned}   
\end{equation}
where $\{\}^c$ is the complement set. This set represents all distributions on $\chi$ for which the ERM will output a hypothesis outside of $A_{\theta_0}$ (and thus, sub-optimal). Notice that $Q \notin \Pi$, as $Q$ is the true distribution on $\chi$, and under it the ERM must converge to the optimal hypothesis due to assumption \ref{assumption consistent}.

Using a method of types based analysis \cite{mot,csiszar2011information,sanov1961probability} we get the following exponential behavior (see appendix \ref{appendix: proof theorem 2} for details):
\begin{equation}
\begin{aligned}
    \Pro{\gemp \notin A_{\theta_0} \mid \Ropt(\optemp) < \delta} 
     \doteq2^{-n\infdiv{\Pi}{Q}}
\end{aligned}    
\end{equation}
Furthermore, by using Theorem \ref{theorem: realizable+agnostic} and the worst case error exponent of realizable learning in (\ref{realizable bound}), we get Theorem \ref{theorem: error exponent}.

\section{Example}
\label{sec:Experiments}
This example shows how to compute the error exponent and that it empirically converges to the value in the theorem. Let  $\X = [0,1]$ with uniform distribution, and the ground truth is:
\begin{align*}
g(x) = \begin{cases}
       0 &\quad 0 < x < 0.6\\
       1 &\quad 0.6 < x < 0.9 \\
       0 &\quad 0.9 < x < 1 \\
     \end{cases}
\end{align*}
We use the 1-boundary hypothesis class for ERM learning.
The optimal hypothesis minimizing the risk is:
\begin{align*}
f_{opt}(x) = \begin{cases}
       0 &\quad 0 < x < 0.6\\
       1 &\quad 0.6 < x < 1 \\
     \end{cases}
\end{align*}
There are two GLP's:  $\theta_0 = 0.6, \ \theta_1 = 1$.
Their $A_\theta$ regions are $A_{\theta_0} = (0,0.9) \ ,\ A_{\theta_1} = (0.9,1)$.
The pairs of regions $D_1, D^{'}_1$, as defined in (\ref{D_pairs}), are $D_1 = (0.6, 0.9) \ ,\ D^{'}_1 = (0.9, 1)$.
The disjointified regions (as in (\ref{X_region})) are $X_1 = (0.6, 0.9) \ ,\ X^{'}_1 = (0.9, 1) \ ,\ X_c = (0, 0.6)$.
We get an alphabet $\chi=\{X_1, X^{'}_1, X_c\}$ with probabilities  $Q = (0.3, 0.1, 0.6)$.
The region $\Pi$ is: 
\begin{equation*}
\begin{aligned}
    &\Pi = \Biggl\{(p_1,p_2,p_3) \mathrel{\Bigg|} 
    \begin{pmatrix}
    -1 & 1
    \end{pmatrix}
    \begin{pmatrix}
    p_1\\
    p_2
    \end{pmatrix} \ge0 , 
    \sum_{i=1}^3 p_i = 1, \ 0 \leq p_i 
    \Biggl\} 
\end{aligned}
\end{equation*}
Computing $\infdiv{\Pi}{Q}$ is a simple constraint optimization problem with solution $\infdiv{\Pi}{Q} = 0.0551$. The error exponent using bound (\ref{agnostic bound}) is $\frac{\delta^2}{32} = 0.0003$, while our improved error exponent is  $\min\{\frac{\delta}{4}, \infdiv{\Pi}{Q}\} = 0.025$.
Figure \ref{fig:error exponent} shows that the empirical error exponent of the PAC error probability indeed converges to $\infdiv{\Pi}{Q}$.

\begin{figure}[htbp]
\centering
\includegraphics[width=0.4\textwidth]{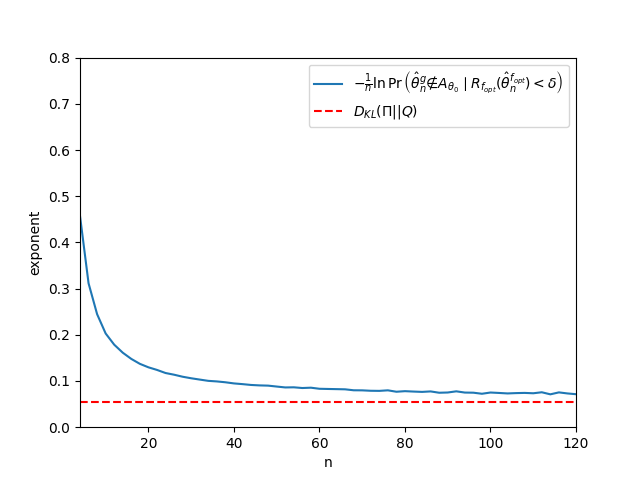}
\caption{Empirical error exponent of the second term in theorem \ref{theorem: realizable+agnostic}. $\ell=2$, $\delta=0.1$. The empirical exponent (blue) was computed using simulation.}
\label{fig:error exponent}
\end{figure}

\section{Conclusions And Future Research}
\label{sec:conclusion}
We derived an improved error exponent for agnostic PAC learning and showed that in some cases agnostic learning might be no harder than realizable learning. Any new realizable learning bound can be plugged into Theorem \ref{theorem: error exponent} to get a better agnostic bound. This result opens new directions for research. One important goal can be to
find explicit conditions for practical hypotheses classes (e.g, neural networks) satisfying the conditions for Theorem \ref{theorem: error exponent}.

Interestingly, the error exponent analysis of PAC learning turns out to be useful in attaining the first theoretical results for the knowledge distillation problem, \cite{adithesis}, providing conditions that define where the associated teacher-student learning is useful and where it is not.
\enlargethispage{-1.2cm} 

\bibliographystyle{IEEEtran}
\bibliography{bibliofile}

\begin{thebibliography}{10}
\providecommand{\url}[1]{#1}
\csname url@samestyle\endcsname
\providecommand{\newblock}{\relax}
\providecommand{\bibinfo}[2]{#2}
\providecommand{\BIBentrySTDinterwordspacing}{\spaceskip=0pt\relax}
\providecommand{\BIBentryALTinterwordstretchfactor}{4}
\providecommand{\BIBentryALTinterwordspacing}{\spaceskip=\fontdimen2\font plus
\BIBentryALTinterwordstretchfactor\fontdimen3\font minus \fontdimen4\font\relax}
\providecommand{\BIBforeignlanguage}[2]{{%
\expandafter\ifx\csname l@#1\endcsname\relax
\typeout{** WARNING: IEEEtran.bst: No hyphenation pattern has been}%
\typeout{** loaded for the language `#1'. Using the pattern for}%
\typeout{** the default language instead.}%
\else
\language=\csname l@#1\endcsname
\fi
#2}}
\providecommand{\BIBdecl}{\relax}
\BIBdecl

\bibitem{Valiant84}
\BIBentryALTinterwordspacing
L.~G. Valiant, ``A theory of the learnable.'' \emph{Commun. ACM}, vol.~27, no.~11, pp. 1134--1142, 1984. [Online]. Available: \url{http://dblp.uni-trier.de/db/journals/cacm/cacm27.html#Valiant84}
\BIBentrySTDinterwordspacing

\bibitem{PAC}
L.~Valiant, \emph{Probably Approximately Correct: Nature's Algorithms for Learning and Prospering in a Complex World}.\hskip 1em plus 0.5em minus 0.4em\relax Basic Books, 2013.

\bibitem{StatisticalLearningTheory}
V.~Vapnik, \emph{Statistical Learning Theory}.\hskip 1em plus 0.5em minus 0.4em\relax Wiley New York, 1998.

\bibitem{vapnik1974theory}
V.~Vapnik and A.~Chervonenkis, ``Theory of pattern recognition,'' 1974.

\bibitem{bousquet2004introduction}
O.~Bousquet, S.~Boucheron, and G.~Lugosi, ``Introduction to statistical learning theory,'' \emph{Advanced Lectures on Machine Learning: ML Summer Schools 2003, Canberra, Australia, February 2-14, 2003, T{\"u}bingen, Germany, August 4-16, 2003, Revised Lectures}, pp. 169--207, 2004.

\bibitem{bousquet2021theory}
O.~Bousquet, S.~Hanneke, S.~Moran, R.~Van~Handel, and A.~Yehudayoff, ``A theory of universal learning,'' in \emph{Proceedings of the 53rd Annual ACM SIGACT Symposium on Theory of Computing}, 2021, pp. 532--541.

\bibitem{cohn1990can}
D.~Cohn and G.~Tesauro, ``Can neural networks do better than the vapnik-chervonenkis bounds?'' \emph{Advances in Neural Information Processing Systems}, vol.~3, 1990.

\bibitem{cohn1992tight}
------, ``How tight are the vapnik-chervonenkis bounds?'' \emph{Neural Computation}, vol.~4, no.~2, pp. 249--269, 1992.

\bibitem{nagarajan2019uniform}
V.~Nagarajan and J.~Z. Kolter, ``Uniform convergence may be unable to explain generalization in deep learning,'' \emph{Advances in Neural Information Processing Systems}, vol.~32, 2019.

\bibitem{nitanda2019stochastic}
A.~Nitanda and T.~Suzuki, ``Stochastic gradient descent with exponential convergence rates of expected classification errors,'' in \emph{The 22nd International Conference on Artificial Intelligence and Statistics}.\hskip 1em plus 0.5em minus 0.4em\relax PMLR, 2019, pp. 1417--1426.

\bibitem{haussler1994bounds}
D.~Haussler, M.~Kearns, and R.~E. Schapire, ``Bounds on the sample complexity of bayesian learning using information theory and the vc dimension,'' \emph{Machine learning}, vol.~14, pp. 83--113, 1994.

\bibitem{audibert2007fast}
J.-Y. Audibert and A.~B. Tsybakov, ``Fast learning rates for plug-in classifiers,'' 2007.

\bibitem{lever2010distribution}
G.~Lever, F.~Laviolette, and J.~Shawe-Taylor, ``Distribution-dependent pac-bayes priors,'' in \emph{International Conference on Algorithmic Learning Theory}.\hskip 1em plus 0.5em minus 0.4em\relax Springer, 2010, pp. 119--133.

\bibitem{lever2013tighter}
------, ``Tighter pac-bayes bounds through distribution-dependent priors,'' \emph{Theoretical Computer Science}, vol. 473, pp. 4--28, 2013.

\bibitem{bendavid2014sample}
S.~Ben-David and R.~Urner, ``The sample complexity of agnostic learning under deterministic labels,'' in \emph{Conference on Learning Theory}.\hskip 1em plus 0.5em minus 0.4em\relax PMLR, 2014, pp. 527--542.

\bibitem{benedek1988nonuniform}
G.~M. Benedek and A.~Itai, ``Nonuniform learnability,'' in \emph{Automata, Languages and Programming: 15th International Colloquium Tampere, Finland, July 11--15, 1988 Proceedings 15}.\hskip 1em plus 0.5em minus 0.4em\relax Springer, 1988, pp. 82--92.

\bibitem{hanneke2020universal}
S.~Hanneke, A.~Kontorovich, S.~Sabato, and R.~Weiss, ``Universal bayes consistency in metric spaces,'' in \emph{2020 Information Theory and Applications Workshop (ITA)}.\hskip 1em plus 0.5em minus 0.4em\relax IEEE, 2020, pp. 1--33.

\bibitem{hanneke2021learning}
S.~Hanneke, ``Learning whenever learning is possible: Universal learning under general stochastic processes,'' \emph{The Journal of Machine Learning Research}, vol.~22, no.~1, pp. 5751--5866, 2021.

\bibitem{shalev2014understanding}
S.~Shalev-Shwartz and S.~Ben-David, \emph{Understanding machine learning: From theory to algorithms}.\hskip 1em plus 0.5em minus 0.4em\relax Cambridge university press, 2014.

\bibitem{gallager1968information}
R.~G. Gallager, \emph{Information theory and reliable communication}.\hskip 1em plus 0.5em minus 0.4em\relax Springer, 1968, vol. 588.

\bibitem{mot}
I.~Csiszar, ``The method of types [information theory],'' \emph{IEEE Transactions on Information Theory}, vol.~44, no.~6, pp. 2505--2523, 1998.

\bibitem{sanov1961probability}
I.~N. Sanov, ``On the probability of large deviations of random variables,'' \emph{Selected Translations in Mathematical Statistics and Probability}, vol.~1, pp. 213--244, 1961.

\bibitem{anthony1999neural}
M.~Anthony, P.~L. Bartlett, P.~L. Bartlett \emph{et~al.}, \emph{Neural network learning: Theoretical foundations}.\hskip 1em plus 0.5em minus 0.4em\relax cambridge university press Cambridge, 1999, vol.~9.

\bibitem{csiszar2011information}
I.~Csisz{\'a}r and J.~K{\"o}rner, \emph{Information theory: coding theorems for discrete memoryless systems}.\hskip 1em plus 0.5em minus 0.4em\relax Cambridge University Press, 2011.

\bibitem{adithesis}
A.~Hendel, ``{Improved PAC Learning Bounds with Application to Knowledge Distillation},'' {M.Sc} thesis., Dept.\ of Electrical Engineering - Systems, Tel-Aviv Univ., Tel-Aviv, Israel., 2023.

\end{thebibliography}
\newpage
\
\newpage
\appendices
\section{Proof $A_{\theta}$ form a complete partitioning of $\Theta$}\label{appendix perlimineries}
\begin{lemma}
Let there be a complete hypothesis set \(\Ftheta\) (as in assumption \ref{assumption complete} in the paper), a ground truth function $g$ and a set of GLP's $\Theta_{opt}$ of $R_g(\theta)$. The regions $A_{\theta^*}, \ \theta^* \in \Theta_{opt}$, form a complete partitioning of $\Theta$.
\end{lemma}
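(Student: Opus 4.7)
The lemma splits into two claims: (i) the regions $A_{\theta^*}$, $\theta^* \in \Theta_{opt}$, are pairwise disjoint, and (ii) their union is all of $\Theta$. Claim (i) is immediate from Definition \ref{def:A_region}: Step~1 excises every overlap of $\tilde{A}_{\theta_0}$ with the other $\tilde{A}_{\theta^*}$, while Step~2 resolves the remaining mutual intersections among the non-global regions by arbitrary assignment, so no point can belong to two distinct $A_{\theta^*}$. The substance therefore lies in (ii), which I would reduce to the following statement: every $\theta \in \Theta$ satisfies $\theta \in \tilde{A}_{\theta^*}$ for some GLP $\theta^* \in \Theta_{opt}$; the assignment rule of Step~2 then routes $\theta$ into some specific $A_{\theta^{**}}$.

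I would introduce the almost-sure preorder $\theta' \preceq \theta \Leftrightarrow \ell(f_{\theta'}(x), g(x)) \le \ell(f_{\theta}(x), g(x))$ a.s. Combining Definition \ref{def:glp} with (\ref{degenerate condition}), a GLP is (modulo a.s.\ equivalence of hypotheses) a $\preceq$-minimal element, and $\theta \in \tilde{A}_{\theta^*}$ is exactly $\theta^* \preceq \theta$. The plan is to build, starting from $\theta_0 = \theta$, a descending sequence that either terminates at a GLP or converges to one. As long as $\theta_n$ is not a GLP, the negation of Definition \ref{def:glp} furnishes some $\tilde\theta \preceq \theta_n$ strict in the preorder; to force the iteration to actually reach a minimal element, I would select
\[
\theta_{n+1} \preceq \theta_n, \qquad R_g(\theta_{n+1}) \le R_n + \tfrac{1}{n+1},
\]
where $R_n := \inf\{R_g(\tilde\theta) : \tilde\theta \preceq \theta_n\}$. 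The crucial identity for the 0-1 loss, a direct consequence of the pointwise inequality $\ell(f_{\theta_{n+1}}, g) \le \ell(f_{\theta_n}, g)$, is
\[
\int_{\X} \ell\big(f_{\theta_n}(x), f_{\theta_{n+1}}(x)\big)\, d\mathcal{F}(x) = R_g(\theta_n) - R_g(\theta_{n+1}),
\]
so the telescoping sum $\sum_n \int \ell(f_{\theta_n}, f_{\theta_{n+1}})\, d\mathcal{F}$ is bounded by $R_g(\theta_0)$ and converges. Assumption \ref{assumption complete} then furnishes a limit $f_{\theta^*} \in F_\Theta$. The pointwise monotonicity of $\ell(f_{\theta_n}, g)$ together with the $L^1$ convergence of $f_{\theta_n}$ to $f_{\theta^*}$ yields both $\theta^* \preceq \theta$ and, by bounded convergence, $R_g(\theta_n) \to R_g(\theta^*)$.

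The main obstacle is showing that the limit $\theta^*$ is itself a GLP; otherwise the descent could in principle continue past it, and the construction would need to be iterated transfinitely. I would settle this by contradiction: suppose some $\tilde\theta \preceq \theta^*$ with $\theta^* \not\preceq \tilde\theta$ existed. Then the loss inequality is strict on a positive-measure set, hence $R_g(\tilde\theta) < R_g(\theta^*)$. Transitivity of $\preceq$ together with $\theta^* \preceq \theta_n$ (which follows from the pointwise monotone limit) give $\tilde\theta \preceq \theta_n$ for every $n$, whence $R_n \le R_g(\tilde\theta)$. Combining with $R_g(\theta_{n+1}) \le R_n + \tfrac{1}{n+1}$ and $R_g(\theta_n) \to R_g(\theta^*)$ forces $R_g(\theta^*) \le R_g(\tilde\theta)$, contradicting the strict inequality above. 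Thus $\theta^*$ is a GLP, $\theta \in \tilde{A}_{\theta^*}$, and the assignment rule completes the coverage claim.
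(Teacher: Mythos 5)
Your proposal is correct, and it reaches the conclusion by a route that differs in structure from the paper's. The paper argues by contradiction at the level of the uncovered set: it posits a set $\tilde{\Theta}$ of hypotheses lying in no $A_{\theta^*}$, builds a descending chain inside $\tilde{\Theta}$ under the same almost-sure domination order you call $\preceq$, invokes Assumption~\ref{assumption complete} in the infinite case, and shows the resulting ``minimal'' element would beat every covered hypothesis (via the defining property of the $A_{\theta^*}$ regions) and every other element of $\tilde{\Theta}$ on a positive-probability set, hence would itself be a GLP --- contradicting its not being covered. You instead prove the stronger statement directly: every $\theta \in \Theta$ is a.s.\ dominated by some GLP, obtained as the limit of a chain started at $\theta$. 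The key technical difference is your greedy selection $R_g(\theta_{n+1}) \le R_n + \tfrac{1}{n+1}$ with $R_n$ the infimum of the risk over $\preceq$-predecessors: this is what forces the limit to be $\preceq$-minimal, and it makes rigorous precisely the step the paper leaves loose (the paper's chain is an arbitrary descending sequence, and its claim that the chain's limit dominates every other element of $\tilde{\Theta}$ on a positive-probability set is asserted without a selection mechanism that would guarantee it). Your telescoping identity $\int_{\X}\ell(f_{\theta_n},f_{\theta_{n+1}})d\mathcal{F} = R_g(\theta_n)-R_g(\theta_{n+1})$ also matches the hypothesis of Assumption~\ref{assumption complete} more literally than the paper's appeal to monotonicity of $R_g(\theta_m)$. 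Both arguments share the same caveat, which you flag explicitly: if $\Theta$ contains distinct parameters whose classifiers coincide almost surely, neither is a GLP under the literal Definition~\ref{def:glp}, and the coverage claim only holds modulo that a.s.\ equivalence; the paper's proof makes the same implicit identification when it speaks of hypotheses in $\tilde{\Theta}$ that ``coincide,'' so this is a gap of the lemma's formulation rather than of your proof relative to the paper's.
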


\begin{proof}
By definition, the regions are disjoint. Assume exists a distinct set $\Tilde{\Theta}$ of hypotheses that don't belong to any $A_{\theta^*}$, $\theta^* \in \Theta_{opt}$. We will first prove that either all hypotheses in $\Tilde{\Theta}$ coincide or there exist $\Tilde{\theta} \in \Tilde{\Theta}$ such that $\forall \ \theta \in \Tilde{\Theta}\setminus\Tilde{\theta}$ exists a set $\Tilde{X} \subseteq \X$ with positive probability such that $\forall x \in \Tilde{X}, \ \ell(f_\theta(x),g(x)) > \ell(f_{\Tilde{\theta}}(x),g(x))$. Assume by contradiction this is not true, thus exists $\Tilde{\theta}_1 \in \Tilde{\Theta}$ such that $\ell(g(x),f_{\Tilde{\theta}_1}(x)) \leq \ell(g(x),f_{\Tilde{\theta}}(x))$ w.p 1. And for $\Tilde{\theta}_1$ we can find $\Tilde{\theta}_2$ such that $\ell(g(x),f_{\Tilde{\theta}_2}(x)) \leq \ell(g(x),f_{\Tilde{\theta}_1}(x))$ w.p 1. By repeating this we get a series $\Tilde{\theta}_m \in \Tilde{\Theta}$ such that $\ell(g(x),f_{\Tilde{\theta}_{m+1}}(x)) \leq \ell(g(x),f_{\Tilde{\theta}_m}(x))$ w.p 1.\\
If this series is finite, either the series is not distinct and the hypotheses in $\Tilde{\Theta}$ coincide or for any other hypothesis in $\Tilde{\Theta}$ we can find a set with positive probability for which the last element in the series has lower loss, which is a contradiction. \\
If the series is infinite, then the series $\int_{\X}\ell(g(x),f_{\Tilde{\theta}_m}(x))d\mathcal{F}(x)$ has a limit because it is monotone. By the completeness assumption, this means that the limit hypothesis of $f_{\theta_m}$ is in $F_{\Theta}$. If it is also in $F_{\Tilde{\Theta}}$, then either all hypotheses in $\Tilde{\Theta}$ coincide or for any other hypothesis in $\Tilde{\Theta}$ we can find a set with positive probability for which the limit hypothesis has a lower loss, which is a contradiction. If the limit hypothesis is not in $F_{\Tilde{\Theta}}$, then the whole series $\theta_m$ belongs to one of $A_{\theta^*}$, $\theta^* \in \Theta_{opt}$, which is a contradiction. 

To conclude this part, we've showed that either all hypotheses in $\Tilde{\Theta}$ coincide to a single hypothesis $\Tilde{\theta}$ or there exist $\Tilde{\theta} \in \Tilde{\Theta}$ for which $\forall \ \theta \in \Tilde{\Theta}\setminus\Tilde{\theta}$ exists a set $\Tilde{X} \subseteq \X$ with positive probability such that $\forall x \in \Tilde{X}, \ \ell(f_\theta(x),g(x)) > \ell(f_{\Tilde{\theta}}(x),g(x))$. 
\\
\\
Because $\Tilde{\theta}$ doesn't belong to any of the regions $A_{\theta^*}$, $\theta^* \in \Theta_{opt}$, it means $\Tilde{\theta}$ is not a GLP, as every GLP belongs to its own $A_{\theta^*}$ region. It also means that for every GLP $\theta^* \in \Theta_{opt}$ exists a set $\Tilde{X} \subseteq \X$ with positive probability such that \(\ell(f_{\Tilde{\theta}}(x),g(x)) < \ell(f_{\theta^*}(x),g(x)) \ \forall x \in \Tilde{X}\). By definition of $A_{\theta^*}$, we have
$\Pro{\ell\big(f_{\theta^*}(x), g(x)\big) \leq \ell\big(f_{\theta}(x), g(x)\big)}=1 \ \forall \theta \in A_{\theta^*}$.\\
Thus, $ \forall \theta \in \cup_{\theta^* \in \Theta_{opt}} A_{\theta^*}$ there exists a set $\Tilde{X} \subseteq \X$ with positive probability such that \(\ell(f_{\Tilde{\theta}}(x),g(x)) < \ell(f_{\theta}(x),g(x)) \ \forall x \in \Tilde{X}\). We got that $\forall \theta \in \Theta\setminus\Tilde{\theta}$ exist a set $\Tilde{X} \subseteq \X$ with positive probability such that \(\ell(f_{\Tilde{\theta}}(x),g(x)) < \ell(f_{\theta}(x),g(x)) \ \forall x \in \Tilde{X}\), which is a contradiction to $\Tilde{\theta}$ not being a GLP.\\
Thus, the set $\Tilde{\Theta}$ is empty and the regions $A_{\theta^*}, \ \theta^* \in \Theta_{opt}$ form a complete partitioning of $\Theta$. 
\end{proof}

\section{Supplemented material for "Theoretical Results" section}\label{appendix: PAC analysis}
In this section we'll prove theorems \ref{theorem: realizable+agnostic} and \ref{theorem: error exponent}. This will be done gradually: in subsection \ref{appendix: equivalence} we'll prove an equivalent formulation to the PAC error probability that will be used in proving the theorems. In subsection \ref{appendix: proof 1}, we'll prove theorem \ref{theorem: realizable+agnostic} along with 2 needed Lemmas. In section \ref{appendix: bounds} we'll prove a Lemma about lower and upper bounds on the seconds term in theorem \ref{theorem: realizable+agnostic} that will be used in proving theorem \ref{theorem: error exponent}. Finally, in \ref{appendix: proof theorem 2} we'll prove theorem \ref{theorem: error exponent} along with with 3 needed Lemmas. The proofs of theorems \ref{theorem: realizable+agnostic} and \ref{theorem: error exponent} in this section are provided for the case of a finite number $K+1$ of GLP's. This is generalized to an infinite number of GLP's in section \ref{appendix:generalization} of the appendix. Note that we will sometimes refer to equations from the paper.
\subsection{K-boundary has stable global GLP}\label{appendix: K-boundary} 
\begin{lemma}\label{lemma:K-boundary}
Let $g(x)$, $x \in [0,1]$ be a binary ground truth function with at most $M$ transition points between $0$ and $1$. Let $\Ftheta$ be the K-boundary class. Then the optimal hypothesis when learning from $g(x)$ with binary loss is a stable GLP.
\end{lemma}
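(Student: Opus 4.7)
The plan is to first characterize the structure of the optimal hypothesis $f_{opt}$, then deduce stability from this structure, and finally verify the GLP property as a consequence of optimality. I will set up notation by writing the transitions of $g$ as $t_1 < t_2 < \cdots < t_J$ with $J \le M$; between consecutive $t_j$'s (and on $[0,t_1)$, $(t_J,1]$), $g$ is constant. Denote the optimal parameters $b_1^* \le \cdots \le b_K^*$ and call $b_i^*$ an \emph{effective} boundary when it is not collapsed with an adjacent parameter, i.e., the function genuinely changes value there.

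Next I would establish the key structural lemma: every effective boundary of $f_{opt}$ coincides with a transition point of $g$, with a matching transition direction. The argument is by local perturbation. If an effective boundary $b^*$ lay in an open interval where $g$ is constant, shifting $b^*$ slightly into the disagreement side would strictly reduce the risk, because the disagreement region shrinks linearly in the shift while no new disagreement appears, contradicting optimality. Similarly, if the transition direction of $f_{opt}$ at $b^* = t_j$ were opposite to that of $g$, then $f_{opt}$ would be wrong on both immediate sides of $t_j$, and either collapsing $b^*$ with an adjacent parameter or shifting it to the next matching transition of $g$ strictly decreases the risk. Consequently, in a small neighborhood of each effective boundary, $f_{opt}$ agrees with $g$ outside a measure-zero set.

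To prove stability I would take the sup-norm distance on parameters and choose $\epsilon>0$ smaller than (i) half the minimum gap between consecutive $t_j$'s, (ii) half the minimum gap between any $b_i^*$ and the nearest $t_j\neq b_i^*$, and (iii) half the minimum gap between consecutive effective boundaries of $f_{opt}$. For any $\theta$ with $\|\theta-\theta^*\|<\epsilon$, the symmetric difference $\{x:f_\theta(x)\neq f_{opt}(x)\}$ lies in the union of $\epsilon$-neighborhoods of the $b_i^*$'s, and by the choice of $\epsilon$ together with the structural lemma, $f_{opt}=g$ on each such neighborhood. Hence $\ell(f_{opt},g)=0\le \ell(f_\theta,g)$ on the symmetric difference and $f_\theta=f_{opt}$ off it, giving the required stability inequality with probability one. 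Collapsed-boundary pairs require separate handling: a perturbation that splits such a pair creates a small spurious bump in a region where $g$ is constant and $f_{opt}=g$, so the same conclusion applies.

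For the GLP property, for any $\theta\neq\theta^*$ whose hypothesis differs from $f_{opt}$ on a positive-probability set, partition the symmetric difference into $A=\{x:f_{opt}(x)=g(x),\ f_\theta(x)\neq g(x)\}$ and $B=\{x:f_{opt}(x)\neq g(x),\ f_\theta(x)=g(x)\}$. Optimality of $f_{opt}$ gives $\Pr(A)\ge \Pr(B)$, and since $\Pr(A)+\Pr(B)>0$ we conclude $\Pr(A)>0$, which is exactly the GLP condition against $\theta$. The step I expect to be the main obstacle is the structural lemma together with careful bookkeeping of collapsed boundaries: one must uniformly handle perturbations that split collapsed pairs, merge effective boundaries, or push a boundary against the endpoints of $[0,1]$, and also rule out pathological configurations allowed by the uniqueness convention in Definition~\ref{def k-boundary hypothesis class} (e.g., parameters $\neq\theta^*$ whose induced function coincides a.s.\ with $f_{opt}$, which must be excluded before invoking the optimality argument).
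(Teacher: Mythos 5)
Your proposal follows essentially the same route as the paper's proof: a local-exchange argument showing that every effective boundary of $f_{opt}$ must sit at a transition point of $g$ with $f_{opt}=g$ in a neighborhood of it, then stability by choosing $\epsilon$ below half the minimum gap so that any perturbation can differ from $f_{opt}$ only inside neighborhoods where $f_{opt}=g$, and the GLP property deduced directly from optimality of $f_{opt}$. Your explicit handling of collapsed parameter pairs and of distinct parameters inducing a.e.-identical hypotheses is extra care beyond the paper's proof (which implicitly treats all $K$ parameters as genuine transition points), but it does not change the substance of the argument.
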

\begin{proof}
    Let $\theta_0$ be the optimal hypothesis when learning from $g$ with binary loss. $\theta_0$ is a GLP as for any other hypothesis in the class there must exist a set in $\X$ for which $\theta_0$ is uniformly better (otherwise it wouldn't minimize the risk).Denote the transition point of $g$ as $b_1,...,b_M$ and the transition points of $f_{\theta_0}$ as $a_1,...,a_K$. Assume without loss of generality that $b_i$ isn't equal to either $0$ or $1$ and denote $b_0=0$ and $b_{M+1}=1$. First, we'll prove that every transition point of $f_{\theta_0}$ equals to one of $b_0,...,b_{M+1}$ - assume by contradiction exists $a_i$ that isn't equal to one of $b_0,...,b_{M+1}$, thus satisfying $b_j < a_i < b_{j+1}$, where $j$ is one of $0,..., M+1$. For $b_j<x<b_{j+1}$, $g(x)$ is constant (either 0 or 1). We can generate 2 new hypotheses by changing $a_i$ to $b_j$ or to $b_{j+1}$, at least of these new hypotheses has zero loss for $x \in [b_j, b_{j+1}]$ and coincides with $\theta_0$ outside of it. Notice that $\theta_0$ has non-zero loss on $[b_j, b_{j+1}]$ as it can coincide with $g$ either on $[a_i, b_{j+1}]$ or on $[b_j, a_i]$. This, $\theta_0$ is not the risk minimizer, which is a contradiction.\\
    We move to prove that for every $[b_j, b_{j+1}]$, if exist $i$ such that either $a_i=b_j$ or $a_i=b_{j+1}$, then $f_{\theta_0}(x) = g(x)$ for $x \in[b_j, b_{j+1}]$ - assume by contradiction that there is such subset $[b_j, b_{j+1}]$ for which $a_i = b_j$ (the case of $a_i = b_{j+1}$ is analogous ) and $f_{\theta_0}(x)$ doesn't coincide with $g(x)$. Because both $g(x)$ and $f_{\theta_0}(x)$ don't have any additional transition points between $b_j$ and $b_{j+1}$, they are constant in this region and $f_{\theta_0} \neq g(x)$ for $b_j<x<b_{j+1}$. By changing $a_i$ to be $b_{j+1}$ instead of $b_j$, we obtained a new K-boundary hypothesis that coincides with $f_{\theta_0}$ outside $[b_j, b_{j+1}]$ and coincides with $g$ on $[b_j, b_{j+1}]$ thus obtaining better risk than $\theta_0$ which is a contradiction to its optimality. \\
    Denote $\epsilon = \min_{i\in\{ 0,..,M\}}||b_{i+1} - b_i ||/2 > 0$, and some perturbation $\Tilde{\theta}_0$ with transition points $a'_1, ..., a'_K$, where $||(a'_1, ..., a'_K) - (a_1, ..., a_K)|| < \epsilon$.
    For every $i =1,..,K$ we know that $a_i$ is one of $b_j$. thus, if $a_i =b_j$, then \\
    $b_j -\epsilon <a'_i < b_j +\epsilon$
    Thus, $b_{j-1} < a'_i < b_{j+1}$. To show $\theta_0$ is stable, we need to show that any region for which $f_{\theta_0}(x) = g(x)$, we also have $f_{\Tilde{\theta}_0}(x) = g(x)$. For every $j=0,..,M$, $f_{\theta_0}(x)$ might have loss on the region $[b_j, b_{j+1}]$ only if there is no $i$ for which $a_i = b_j$ or $a_i = b_{j+1}$. But this means that for every $i=1,..,K$, if $a_i =b_l$ (where $l \neq j$) then $b_{l-1} < a'_i < b_{l+1}$, thus $a'_i \notin [b_j, b_{j+1}]$. Moreover, if $a_i < b_j$ then $a'_i < b_j$, and if $a_i > b_{j+1}$ then $a'_i > b_{j+1}$. Thus, the amount of transition points of $f_{\theta_0}$ before $b_j$ is the same as the amount of transition points of $f_{\Tilde{\theta}_0}$ before $b_j$. We conclude that $f_{\theta_0}$ and $f_{\Tilde{\theta}_0}$ coincide on any such region $[b_j, b_{j+1}]$. Thus, any region that is missclassified by $f_{\theta_0}$ is also missclassified by $f_{\Tilde{\theta}_0}$ and $\theta_0$ is a stable GLP.
    
\end{proof}

\subsection{Equivalence Lemma}\label{appendix: equivalence}
The following Lemma shows the equivalence:
\begin{align*}
    \Pro{R_g(\gemp) - R_g(\theta_{opt})> \delta} = \Pro{R_{f_{opt}}(\gemp)>\delta}
\end{align*}
This will allow us to use the simpler right hand term instead of the PAC error probability.

\begin{lemma}\label{theorem:equivalence}
Let there be a hypothesis class $\Ftheta$ and a ground truth function $g(x)$ with projection $\fopt$ on the hypothesis class. Under assumptions \ref{assumption stable}-\ref{assumption delta} from the main paper,
for every $0<\delta < \delta_{max}$ and $\theta \in \Theta$ the following holds:
\begin{align}
    R_g(\theta) - R_g(\theta_{opt}) < \delta \Longleftrightarrow \Ropt(\theta) <\delta , \ \theta \in \Theta
\end{align}
\end{lemma}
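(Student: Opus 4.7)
The plan is to split the analysis into two cases depending on whether $\theta \in A_{\theta_0}$ (writing $\theta_0 = \theta_{opt}$ for brevity). The key algebraic observation is a clean decomposition of both sides of the desired equivalence in terms of the same two sets. Define
\[
B = \{x \in \X : f_{\theta}(x) \neq g(x) = f_{opt}(x)\}, \quad C = \{x \in \X : f_{\theta}(x) = g(x) \neq f_{opt}(x)\}.
\]
A direct bookkeeping over the four possible agreement patterns of the triple $(f_\theta(x), f_{opt}(x), g(x))$ gives
\[
R_g(\theta) - R_g(\theta_{opt}) = \Pr(B) - \Pr(C), \qquad R_{f_{opt}}(\theta) = \Pr(B) + \Pr(C).
\]
So everything reduces to controlling $\Pr(C)$ (the set where $\theta$ is \emph{strictly better} than $\theta_{opt}$ at matching $g$).

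In the first case, when $\theta \in A_{\theta_0}$, I would invoke the defining property of $A_{\theta_0}$: since $A_{\theta_0} \subseteq \tilde{A}_{\theta_0}$ (the overlap-removal in Definition \ref{def:A_region} only shrinks the set), we have $\ell(f_{opt}(x), g(x)) \leq \ell(f_\theta(x), g(x))$ with probability 1. But on the set $C$ this inequality is reversed strictly (zero loss for $f_\theta$, unit loss for $f_{opt}$), which forces $\Pr(C) = 0$. Consequently $R_g(\theta) - R_g(\theta_{opt}) = \Pr(B) = R_{f_{opt}}(\theta)$, and the claimed iff reduces to an equality of the two quantities, hence holds trivially for any $\delta$.

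In the second case, when $\theta \notin A_{\theta_0}$, I would use Assumption \ref{assumption delta} together with the definition (\ref{delta_max}) of $\delta_{max}$ as the minimum of $\min_{\theta \notin A_{\theta_0}} R_{f_{opt}}(\theta)$ and $\min_{\theta \notin A_{\theta_0}} R_g(\theta) - R_g(\theta_{opt})$. Both quantities are therefore at least $\delta_{max} > \delta$, so both inequalities in the iff are false and the equivalence holds vacuously. The main (and only mild) obstacle is making sure that the four-case accounting that produces the $\Pr(B) \pm \Pr(C)$ formulas is carried out carefully, and that the measure-zero adjustments inside Definition \ref{def:A_region} do not break the "$\theta$ is no better than $\theta_0$ a.s." property required in Case 1; both are essentially notational and pose no deep difficulty. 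The real content of the lemma lies in how $\delta_{max}$ is engineered precisely to make Case 2 vacuous, leaving Case 1 to supply the pointwise identity.
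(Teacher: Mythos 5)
Your proposal is correct and follows essentially the same route as the paper: both arguments hinge on the identity $R_g(\theta) - R_g(\theta_{opt}) = \Ropt(\theta)$ for $\theta \in A_{\theta_0}$ (your $\Pr(B)\pm\Pr(C)$ bookkeeping with $\Pr(C)=0$ is just a repackaging of the paper's $X_1/X_0$ integral computation) and on $\delta_{max}$ being defined exactly so that hypotheses outside $A_{\theta_0}$ cannot satisfy either inequality. Organizing the argument as a case split on $\theta \in A_{\theta_0}$ versus $\theta \notin A_{\theta_0}$, rather than direction by direction as the paper does, is only a cosmetic difference.
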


\begin{proof}
$\\$
$\underline{\Longrightarrow}$: 
We have the following due to $\delta < \delta_{max}$:
\begin{align*}
    \Ropt(\theta)<\delta<\min_{\Tilde{\theta} \notin A_{\theta_0}}\Ropt(\Tilde{\theta}) \Longrightarrow \theta \in A_{\theta_0}
\end{align*}
For $\theta \in A_{\theta_0}$ we have:  
\begin{align*}
    \ell(g(x), \fopt(x))=1\ \Longrightarrow\ \ell(g(x), f_{\theta}(x))=1
\end{align*}
Denote:
\begin{align*}
    X_1=\{x:\ell(g(x),\fopt(x))=1\} \\
    X_0 = \{x:\ell(g(x),\fopt(x))=0\}
\end{align*}
The following chain of equalities holds for $\theta \in A_{\theta_{0}}$:
\begin{align*}
    &R_{g}(\theta) = \int\limits_{X}\ell(g(x),f_{\theta}(x))d\mathcal{F}(x) \\
    &= \int\limits_{X_1}\ell(g(x),f_{\theta}(x))d\mathcal{F}(x) + \int\limits_{X_0}\ell(g(x),f_{\theta}(x))d\mathcal{F}(x)\\
    &= \int\limits_{X_1}1d\mathcal{F}(x) + \int\limits_{X_0}\ell(\fopt(x),f_{\theta}(x))d\mathcal{F}(x) \\
    &= R_{g}(\topt) + \int\limits_{X}\ell(\fopt(x),f_{\theta}(x))d\mathcal{F}(x) \\
    &- \int\limits_{X_1}\ell(\fopt(x),f_{\theta}(x))d\mathcal{F}(x) \\
    &= R_{g}(\topt) + \Ropt(\theta)
\end{align*}
So, for \(\theta \in A_{\theta_0}\) we have $R_{g}(\theta) - R_g(\theta_{opt}) = \Ropt(\theta)$.
We got
$R_g(\theta) - R_g(\theta_{opt}) < \delta$.
\\
$\underline{\Longleftarrow}$: 
We have the following due to $\delta < \delta_{max}$:
\begin{align*}
    &R_{g}(\theta) - R_{g}(\topt) < \delta < 
    \min_{\theta \notin A_{\theta_0}}R_{g}(\theta) 
    - R_{g}(\topt)
\end{align*}
Thus, $\theta \in A_{\theta_0}$. We've already showed that for $\theta \in A_{\theta_0}$ we have $R_{g}(\theta) - R_{g}(\topt) = \Ropt(\theta)$. 
So, we got $\Ropt(\theta) < \delta$
\end{proof} 

\subsection{Proof of Theorem \ref{theorem: realizable+agnostic}}\label{appendix: proof 1}
In this subsection we will prove theorem \ref{theorem: realizable+agnostic} from the main paper.
Before proving it, we first need to prove 2 lemmas that will be used as part of the proof. The proof of theorem \ref{theorem: realizable+agnostic} is given in the end of this subsection.
\begin{lemma}\label{lem_cond1}
Given hypothesis set \(\Ftheta\), ground truth function $g$ with projection $\fopt$, and a drawn sample $x^n$, the following holds under assumptions \ref{assumption stable}-\ref{assumption delta} from the main paper:
\begin{align}
    \Pro{\Ropt(\hat{\theta}^{g}(x^n)) >\delta  \ |  \Ropt(\hat{\theta}^{\fopt}(x^n)) > \delta}=1 
\end{align}
\end{lemma}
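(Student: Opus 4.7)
The plan is to prove the lemma via a case analysis on the sample-dependent behavior of $\hat\theta^g$. Setting $\theta^{*}:=\hat\theta^{\fopt}(x^n)$, the key starting observation is that $\topt$ trivially achieves $R^{emp}_{\fopt}=0$ when the labels are $\fopt(x_i)$, so the ERM $\theta^{*}$ also satisfies $R^{emp}_{\fopt}(\theta^{*})=0$; hence $f_{\theta^{*}}(x_i)=\fopt(x_i)$ at every sample point. This yields the crucial identity $R^{emp}_{g}(\theta^{*},x^n)=R^{emp}_{g}(\topt,x^n)$, while by the conditioning hypothesis $\Ropt(\theta^{*})>\delta$.

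I would then split on whether $R^{emp}_{g}(\hat\theta^g,x^n)$ equals $R^{emp}_{g}(\topt,x^n)$ (Case~A) or is strictly smaller (Case~B). In Case~A, $\theta^{*}$ itself minimizes $R^{emp}_{g}$, so the tie-breaking convention (choose the worst, i.e.\ maximum $R_g$) forces $R_g(\hat\theta^g)\ge R_g(\theta^{*})$. Combining this with the inequality $R_g(\theta^{*})-R_g(\topt)>\delta$ (which follows from the equality $R_g(\theta)-R_g(\topt)=\Ropt(\theta)$ for $\theta\in A_{\theta_0}$ extracted from the proof of Lemma~\ref{theorem:equivalence}, or from the $\delta_{max}$ bound in Assumption~\ref{assumption delta} if $\theta^{*}\notin A_{\theta_0}$) gives $R_g(\hat\theta^g)-R_g(\topt)>\delta$, which Lemma~\ref{theorem:equivalence} then converts to $\Ropt(\hat\theta^g)>\delta$. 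In Case~B, I would show $\hat\theta^g\notin A_{\theta_0}$ by contradiction: if $\hat\theta^g\in A_{\theta_0}$, the almost-sure pointwise inequality $\ell(\fopt(x),g(x))\le\ell(f_{\hat\theta^g}(x),g(x))$ lifts to the sample, giving $R^{emp}_{g}(\topt)\le R^{emp}_{g}(\hat\theta^g)$ and contradicting Case~B's strict inequality. With $\hat\theta^g\notin A_{\theta_0}$, the first component of $\delta_{max}$ in Assumption~\ref{assumption delta} immediately yields $\Ropt(\hat\theta^g)\ge\delta_{max}>\delta$.

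The main technical step, and where the ``with probability $1$'' in the lemma statement enters, is the lift in Case~B from the population-level inequality (valid w.p.~$1$ over $x$) to the simultaneous sample-level inequality at each $x_i$; this is non-trivial because $\hat\theta^g(x^n)$ is itself a random function of the sample rather than a fixed element of $A_{\theta_0}$, so one cannot naively apply the single-$\theta$ statement. Handling this carefully (for instance, by arguing that the event of interest is contained in the measure-one event defining membership in $A_{\theta_0}$ at each coordinate) is the only measure-theoretic subtlety. The remaining argument is a direct bookkeeping between the tie-breaking convention, Lemma~\ref{theorem:equivalence}, and the two components of $\delta_{max}$, so I expect no further obstacle.
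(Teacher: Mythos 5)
Your proposal is correct and proves the lemma by a genuinely different decomposition than the paper. The paper splits on whether $\gemp\in A_{\theta_0}$: outside $A_{\theta_0}$ it is done by $\delta_{max}$, and inside it shows that $f_{\gemp}$ must coincide with $\fopt$ (hence with $f_{\optemp}$) on the entire sample, so the worst-case tie-break forces $\gemp=\optemp$ and the conditioning event $\Ropt(\optemp)>\delta$ transfers verbatim. You instead split on whether $R_g^{emp}(\gemp,x^n)$ equals or is strictly below $R_g^{emp}(\topt,x^n)$: in the equality case you only need that $\optemp$ (which reproduces $\fopt$ on the sample, hence has $R_g^{emp}(\optemp,x^n)=R_g^{emp}(\topt,x^n)$) lies in the empirical-minimizer set, so the tie-breaking convention yields the one-sided bound $R_g(\gemp)\ge R_g(\optemp)$, which you push through the identity $R_g(\theta)-R_g(\topt)=\Ropt(\theta)$ on $A_{\theta_0}$ and the two components of $\delta_{max}$; in the strict case you deduce $\gemp\notin A_{\theta_0}$ and finish with $\delta_{max}$. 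Your route is shorter and avoids the paper's argument that the $g$-ERM, when in $A_{\theta_0}$, must agree with $\fopt$ on all sample points where $g=\fopt$; the paper's route buys the stronger conclusion $\gemp=\optemp$ on that event, but the lemma needs only your inequality. Two small remarks: at the end of your Case~A, do not cite Lemma~\ref{theorem:equivalence} literally (its contrapositive only gives $\Ropt(\gemp)\ge\delta$); to keep strictness, use, as you already do earlier in the same sentence, the exact identity on $A_{\theta_0}$ together with $\delta_{max}>\delta$ off $A_{\theta_0}$. And the measure-theoretic lift you flag in Case~B (applying the almost-sure pointwise inequality at the sample points even though $\gemp$ is sample-dependent) appears, unremarked, in the paper's own proof at Eq.~(\ref{equality on tilde_x_k}), so your treatment is no less rigorous than the paper's, though neither argument resolves it beyond the level of Definition~\ref{def:A_region}.
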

\begin{proof}
From assumption \ref{assumption delta} we have $\delta <\delta_{max} \leq \min_{\theta \notin A^g_{\theta_{0}}}\Ropt(\theta)$. Thus, if $\hat{\theta}^{g}(x^n) \notin A_{\theta_0}$, then we have $\Ropt(\hat{\theta}^{g}(x^n)) >\delta$ and we are done. \\
Let's focus on the case $\hat{\theta}^{g}(x^n) \in A_{\theta_0}$.
Denote:
\begin{align}
    \Tilde{x}^k = \{x \in x^n \mid g(x) \neq \fopt(x)\}
\end{align}
We have the following:
\begin{equation}
\begin{aligned}\label{equality on tilde_x_k} 
    &\gemp \in A_{\theta_0} \Longrightarrow \ell(f_{\gemp}(x), g(x)) \ge \ell(\fopt(x), g(x)) \ \text{w.p 1}\\
    &\Longrightarrow f_{\hat{\theta}_n^{g}}(x) = \fopt(x)\ \forall \ x \in \Tilde{x}^k
\end{aligned}
\end{equation}
The empirical risks for any $\theta$ can be decomposed:
\begin{equation*}
\begin{aligned}
R^{emp}_{\fopt}(\theta,x^n) =&\frac{1}{n} \sum_{x \in \Tilde{x}^k}\ell(\fopt(x),f_\theta(x)) +\\
    &\frac{1}{n} \sum_{x \in x^n \setminus\Tilde{x}^k}\ell(\fopt(x),f_\theta(x))\\
    R^{emp}_{g}(\theta,x^n) =&\frac{1}{n} \sum_{x \in \Tilde{x}^k}\ell(g(x),f_\theta(x)) +\\
    &\frac{1}{n} \sum_{x \in x^n \setminus\Tilde{x}^k}\ell(g(x),f_\theta(x))
    \end{aligned}
\end{equation*}
For $\theta \in A_{\theta_0}$, the empirical risk with regard to $g$ is:
\begin{align}
     R^{emp}_{g}(\theta,x^n) = \frac{k}{n} + \frac{1}{n} \sum_{x \in x^n \setminus\Tilde{x}^k}\ell(\fopt(x),f_\theta(x))
\end{align}
Thus, $R^{emp}_{g}(\theta,x^n)$ can be decomposed to 2 terms -  a fixed term and a term that is minimized by $\fopt$. Thus, $\topt$ is the minimizer of $R^{emp}_{g}(\theta,x^n)$, so the ERM $f_{\gemp}$ will choose a hypothesis that is equal to $\fopt$ on the set $x^n \setminus\Tilde{x}^k$. From equation (\ref{equality on tilde_x_k}), $\fopt$ and $f_{\hat{\theta}_n^{g}}$ are also equal on $\Tilde{x}^k$, thus they are equal on the entire sample $x^n$.
We also have $R^{emp}_{\fopt}(\optemp,x^n ) = 0$ because the empirical risk is zero in realizable learning, so $\fopt$ and $f_{\optemp}$ are equal on the entire sample $x^n$. We get:
\begin{align}
    f_{\gemp}(x) = \fopt(x) = f_{\optemp} \ \forall \ x \in x^n
\end{align}
$\gemp$ and $\optemp$ have the same empirical risk. By the convention the ERM is the hypothesis with minimum empirical risk that maximizes the true risk, which is equivalent to $\Ropt(\theta)$ (recall the equivalence in appendix section \ref{appendix: equivalence}). Thus, because they have the same empirical risk, $\gemp$ and $\optemp$ are equal and we have:
\begin{align*}
    &\gemp \in A_{\theta_0} \Longrightarrow f_{\gemp} = f_{\optemp}\\ &\Longrightarrow \Ropt(\gemp) = \Ropt(\optemp) >\delta
\end{align*}
To conclude, the following holds for $\Ropt(\gemp) > \delta$:
\begin{align*}
    \Pro{\Ropt(\hat{\theta}^{g}(x^n)) >\delta  \mid \Ropt(\hat{\theta}^{\fopt}(x^n)) > \delta}=1
\end{align*}
\end{proof}

\begin{lemma}\label{lem_cond2}
Given hypothesis class $\Ftheta$, ground truth $g$ with projection $\fopt$ on the class and a drawn sample $x^n$, the following holds under assumptions \ref{assumption stable}-\ref{assumption delta} from the main paper:
\begin{equation}
    \begin{aligned}
    &\Pro{ \Ropt(\hat{\theta}^{g}(x^n)) >\delta  \ | \ 
    \gemp \in A_{\theta_0},\ \Ropt(\hat{\theta}^{\fopt}(x^n)) < \delta}\\&=0 
\end{aligned}
\end{equation}
\end{lemma}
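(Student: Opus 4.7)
The plan is to observe that the proof of Lemma \ref{lem_cond1} already contains, as an intermediate deterministic conclusion, the identity $\gemp = \optemp$ on the event $\{\gemp \in A_{\theta_0}\}$, and this identity makes the present lemma essentially automatic. In Lemma \ref{lem_cond1} the assumption $\Ropt(\optemp) > \delta$ was used only in the very last step, to convert ``$\gemp = \optemp$'' into ``$\Ropt(\gemp) > \delta$''. For the present statement I would run the same derivation but then combine it with the opposite inequality $\Ropt(\optemp) < \delta$, giving the opposite conclusion $\Ropt(\gemp) < \delta$ on the conditioning event.

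Concretely, I would fix a sample $x^n$ in the event $\{\gemp \in A_{\theta_0}\} \cap \{\Ropt(\optemp) < \delta\}$ and reproduce the argument of Lemma \ref{lem_cond1}: since $\gemp \in A_{\theta_0}$, the pointwise comparison $\ell(f_{\gemp}(x), g(x)) \geq \ell(\fopt(x), g(x))$ forces $f_{\gemp}(x) = \fopt(x)$ on $\Tilde{x}^k = \{x \in x^n : g(x) \neq \fopt(x)\}$, and on $A_{\theta_0}$ the empirical $g$-risk splits as $R^{emp}_g(\theta, x^n) = k/n + R^{emp}_{\fopt}(\theta, x^n)$. Combined with realizability ($R^{emp}_{\fopt}(\optemp, x^n) = 0$) and the tie-breaking convention (applied with the equivalence of Lemma \ref{theorem:equivalence} so that ``maximize $R_g$'' and ``maximize $\Ropt$'' agree on $A_{\theta_0}$), this forces $\gemp = \optemp$, and hence $\Ropt(\gemp) = \Ropt(\optemp) < \delta$.

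Since this holds for every sample in the conditioning event, the conditional probability that $\Ropt(\gemp) > \delta$ equals zero, as required. The main thing to verify, and essentially the only potential pitfall, is that the chain of implications yielding $\gemp = \optemp$ in Lemma \ref{lem_cond1} does not secretly rely on $\Ropt(\optemp) > \delta$. An inspection shows that it uses only $\gemp \in A_{\theta_0}$, the realizability of $\optemp$ with respect to $\fopt$, and assumption \ref{assumption delta} (via Lemma \ref{theorem:equivalence}), so it applies verbatim here; the argument is otherwise a one-line consequence.
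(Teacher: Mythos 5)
Your proof is correct, but it is not the route the paper takes for this lemma. The paper proves Lemma \ref{lem_cond2} by a fresh contradiction argument: it supposes $\gemp=\theta'$ with $\Ropt(\theta')>\delta$, notes that the conditioning $\Ropt(\optemp)<\delta$ together with the tie-breaking convention implies $\theta'$ is not an ERM with respect to $\fopt$, hence the set $\hat{x}^m$ of sample points where $f_{\theta'}\neq\fopt$ is nonempty; since $\theta'\in A_{\theta_0}$ forces $\hat{x}^m$ to be disjoint from $\Tilde{x}^k=\{x\in x^n: g(x)\neq\fopt(x)\}$, $\topt$ beats $\theta'$ strictly in empirical $g$-risk, contradicting $\gemp=\theta'$. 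You instead factor the work through the intermediate conclusion of Lemma \ref{lem_cond1}'s proof, namely that on the event $\{\gemp\in A_{\theta_0}\}$ one has $f_{\gemp}=\fopt=f_{\optemp}$ on the sample and hence $\gemp=\optemp$, and your inspection is accurate: that chain uses only $\gemp\in A_{\theta_0}$, realizability of $\optemp$, and the equivalence/tie-break, never the conditioning $\Ropt(\optemp)>\delta$, so combining it with $\Ropt(\optemp)<\delta$ gives $\Ropt(\gemp)<\delta$ and the lemma follows. What each approach buys: yours is shorter and makes the logical dependence between the two lemmas explicit; the paper's is self-contained and only needs the weaker disjointness/strict-inequality facts rather than the exact identity $\gemp=\optemp$, whose tie-break justification is the least rigorous link in Lemma \ref{lem_cond1} (the two argmaxes are taken over different minimizer sets). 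Note you can sidestep that link entirely: once $f_{\gemp}=\fopt$ on $x^n$, $\gemp$ has zero empirical risk with respect to $\fopt$, so the tie-breaking convention for $\optemp$ already gives $\Ropt(\gemp)\le\Ropt(\optemp)<\delta$ without claiming equality.
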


\begin{proof}
Given $\Ropt(\hat{\theta}^{\fopt}(x^n)) < \delta$, any hypothesis $\theta^{'}$ with $\Ropt(\theta^{'}) > \delta$ doesn't achieve minimal empirical risk on $x^n$ with regard to $\fopt$ . This is true due to the convention that the ERM is the hypothesis with maximal risk from all the hypotheses with minimal empirical risk and due to the equivalence in section \ref{appendix: equivalence}. Denote: 
\begin{align}
    \Tilde{x}^k = \{x \in x^n \mid g(x) \neq \fopt(x)\} \\
    \hat{x}^m = \{x \in x^n \mid f_{\theta^{'}}(x) \neq \fopt(x)\}
\end{align}
Let's assume $\gemp = \theta^{'}$ , which means $\Ropt(\gemp) > \delta$. \\
We are given that $\gemp \in A_{\theta_0}$, so we have $\theta^{'} \in A_{\theta_0}$, which means by definition of $A_{\theta_0}$, that if $f_{\theta^{'}}(x) = g(x)$ then $\fopt(x) = g(x)$. Thus $\Tilde{x}^k \cap \hat{x}^m = \emptyset$.
\\ 
Notice that $\hat{x}^m \neq \emptyset$ because $\topt$ achieves lower empirical risk with regard to $\fopt$ than $\theta^{'}$, so there must be at least one sample of $x^n$ on which $\topt$ is better, otherwise 
$f_{\theta^{'}}$ and $\fopt$ coincide on $x^n$
 and have the same empirical risk, which is a contradiction to $\theta^{'}$ not being ERM with regard to $\fopt$. The empirical risk can be decomposed to:
\begin{equation}
    \begin{aligned}
        &R^{emp}_{g}(\theta,x^n) =\frac{1}{n} \sum_{x \in \hat{x}^m}\ell(g(x),f_\theta(x)) \\
        &+\frac{1}{n} \sum_{x \in  \Tilde{x}^k}\ell(g(x),f_\theta(x))
    \end{aligned}
\end{equation}

By definition of $\hat{x}^m$, we have 
$ R_g^{emp}(\topt,\hat{x}^m) < R_g^{emp}(\theta^{'},\hat{x}^m)$.\\
Form $\theta^{'} \in A_{\theta_{0}}$ we get $R_g^{emp}(\topt, \Tilde{x}^k) = R_g^{emp}(\theta^{'},\Tilde{x}^k)$.\\
Thus we have $R_g^{emp}(\topt,x^n) < R_g^{emp}(\theta^{'},x^n)$.\\
We got that $\topt$ achieves lower empirical risk with regard to $g$, which is a contradiction to $\gemp=\theta^{'}$. Thus, if $\gemp\neq\theta^{'}$ then :\\ $\Pro{ \Ropt(\hat{\theta}^{g}(x^n)) >\delta \mid
    \gemp \in A_{\theta_0},\Ropt(\hat{\theta}^{\fopt}(x^n)) < \delta}=0$.
\end{proof}
\textbf{Proof of theorem \ref{theorem: realizable+agnostic}:}\\
By conditioning $\Pro{\Ropt(\gemp) < \delta}$ on $\Ropt(\optemp)$ we get:
\begin{align*}
    &\Pro{\Ropt(\gemp) < \delta} = \\
    &\Pro{\Ropt(\gemp)<\delta \mid \Ropt(\optemp)<\delta}
    \Pro{\Ropt(\optemp)<\delta}\\  
    &+\Pro{\Ropt(\gemp) < \delta \mid \Ropt(\optemp) > \delta}\\
    &\quad\cdot\Pro{\Ropt(\optemp) > \delta}=\\
    & \Pro{\Ropt(\gemp) < \delta \mid \Ropt(\optemp) < \delta}
    \Pro{\Ropt(\optemp) < \delta}
\end{align*}
Where the last equality is due to Lemma \ref{lem_cond1}.
By conditioning on $\{\gemp \in A_{\theta_0}\}$, we get the following:
\begin{align*}
    &\Pro{\Ropt(\gemp) < \delta \mid \Ropt(\optemp) < \delta}= \\
    &\Pro{\Ropt(\gemp) < \delta \mid \Ropt(\optemp) < \delta,\ \gemp \in A_{\theta_0}}\\
    &\cdot \Pro{\gemp \in A_{\theta_0} \mid \Ropt(\optemp) < \delta}\\
    & +\Pro{\Ropt(\gemp) < \delta \mid \Ropt(\optemp) < \delta,\ \gemp \notin A_{\theta_0}}\\
    &\cdot\Pro{\gemp \notin A_{\theta_0} \mid \Ropt(\optemp)< \delta}\\ 
    &= \Pro{\gemp \in A_{\theta_0} \mid \Ropt(\optemp) < \delta}
\end{align*}
Where the last equality is due to  Lemma \ref{lem_cond2} and because for $\delta<\delta_{max}$ we have $\gemp \notin A_{\theta_0} \Longrightarrow \Ropt(\gemp) > \delta$. We conclude with the following equality:
\begin{align*}
    &\Pro{\Ropt(\gemp) < \delta} =\\ &\Pro{\gemp \in A_{\theta_0} \mid \Ropt(\optemp) < \delta} \Pro{\Ropt(\optemp) < \delta}
\end{align*}
By denoting $\Pro{\Ropt(\optemp) < \delta} = 1-P_R$ and taking the complement probability, we get: 
\begin{align*}
    &\Pro{\Ropt(\gemp) > \delta} = \\
    &P_R +(1-P_R)\Pro{\gemp \notin A_{\theta_0} \mid \Ropt(\optemp) < \delta}
\end{align*}

\subsection{Bounds Lemma}\label{appendix: bounds}
\begin{lemma}\label{lemma: bounds}
    Under assumptions \ref{assumption stable}-\ref{assumption delta} from the main paper, there exists a number $\ell \in \mathbf{N}$ such that the following holds:
\begin{equation*}
\begin{aligned}
    &\Pro{\gemp \notin A_{\theta_0} \mid \Ropt(\optemp) < \delta} \\
    &\ge 
    \Pro{\cup_{i=1}^K \{\#D_i + \ell \leq \#D_i^{'}\}_{n-\ell}}\\
    &\Pro{\gemp \notin A_{\theta_0} \mid \Ropt(\optemp) < \delta}\\ &\leq \Pro{\cup_{i=1}^K \{\#D_i \leq \#D_i^{'}\}_n}
\end{aligned}
\end{equation*}
\end{lemma}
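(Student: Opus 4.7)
The plan is to sandwich the event $\{\gemp \notin A_{\theta_0}\}$ between two combinatorial events on the empirical counts. The starting point is the identity
\[
    R_g^{emp}(\theta_i, x^n) - R_g^{emp}(\theta_0, x^n) = \frac{\#D_i - \#D_i^{'}}{n}, \quad i = 1, \dots, K,
\]
which holds because $\theta_0$ and $\theta_i$ disagree on the loss against $g$ only on $D_i \cup D_i^{'}$ (by Definition \ref{def:D_region}), with each point of $D_i$ contributing one unit of loss to $\theta_i$ and each point of $D_i^{'}$ contributing one to $\theta_0$. Through this identity, the empirical-risk comparison between $\theta_0$ and the rival GLP $\theta_i$ reduces to a comparison of the two counts.

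For the upper bound I would argue by a direct pointwise implication. If $\gemp \notin A_{\theta_0}$, the partitioning lemma in Appendix \ref{appendix perlimineries} places $\gemp$ in some $A_{\theta_i}$ with $i \ge 1$. The GLP dominance of $\theta_i$ yields $R_g^{emp}(\theta_i, x^n) \le R_g^{emp}(\gemp, x^n)$, the ERM minimality yields $R_g^{emp}(\gemp, x^n) \le R_g^{emp}(\theta_0, x^n)$, and chaining through the identity gives $\#D_i \le \#D_i^{'}$. Hence $\{\gemp \notin A_{\theta_0}\} \subseteq \bigcup_{i=1}^K \{\#D_i \le \#D_i^{'}\}_n$ as events on the sample $x^n$, and passing to probabilities (with a normalization factor $1/\Pro{\Ropt(\optemp) < \delta}$ that tends to one exponentially by the realizable bound (\ref{realizable bound})) gives the stated upper bound.

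For the lower bound I would construct a fixed integer $\ell$, independent of $n$, that absorbs two sources of slack: (i) the $\ell$ ``deleted'' samples whose allocation to $D_i$ or $D_i^{'}$ could tilt a tight count inequality in the wrong direction; and (ii) a small extra margin required to make the resulting empirical-risk inequality strict, so that the ERM tie-breaking convention (maximize the true risk) definitively pushes the minimizer out of $A_{\theta_0}$. If on some $n-\ell$ of the samples we have $\#D_i + \ell \le \#D_i^{'}$ for some $i$, then on the full $n$-sample the worst-case allocation of the remaining $\ell$ samples still gives $\#D_i \le \#D_i^{'}$; combined with the identity and the GLP dominance of $\theta_0$ inside $A_{\theta_0}$, every $\theta \in A_{\theta_0}$ satisfies $R_g^{emp}(\theta, x^n) \ge R_g^{emp}(\theta_0, x^n) \ge R_g^{emp}(\theta_i, x^n) \ge R_g^{emp}(\gemp, x^n)$ with at least one strict inequality (after the small augmentation of $\ell$), so $\gemp \notin A_{\theta_0}$. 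The conditional lower bound then follows using $\Pro{A \mid B} \ge \Pro{A \cap B} \ge \Pro{A} - \Pro{B^c}$.

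The hardest part is the lower bound, specifically the interaction between the sub-sample count event and the full-sample conditioning event $\{\Ropt(\optemp) < \delta\}$. These are correlated, since both are functions of $x^n$, and one must verify that the correlation does not spoil the bound; here the worst-case realizable bound (\ref{realizable bound}) is the crucial input, making $\Pro{\{\Ropt(\optemp) < \delta\}^c}$ exponentially negligible so that the passage from joint to conditional probability costs only a subexponential factor. A secondary subtlety is that the same fixed $\ell$ must work uniformly across the $K$ GLPs and across all sample configurations permitted by the conditioning; this is ultimately a finite combinatorial check because the counts concentrate and the tie-breaking margin needed is bounded. Neither point affects the error exponent, since $\ell$ contributes only a polynomial correction that vanishes under $-(1/n)\ln(\cdot)$.
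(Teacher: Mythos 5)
Your event-level reductions are fine (the identity $R_g^{emp}(\theta_i,x^n)-R_g^{emp}(\theta_0,x^n)=(\#D_i-\#D_i^{'})/n$ and the inclusion $\{\gemp \notin A_{\theta_0}\}\subseteq \cup_{i=1}^K\{\#D_i \le \#D_i^{'}\}_n$ are exactly what the paper uses), but your treatment of the conditioning is where the argument breaks, and it is also where your route departs from the paper's. As written, you do not prove the stated inequalities: your upper bound comes out as $\Pro{\cup_i\{\#D_i\le\#D_i^{'}\}_n}/\Pro{\Ropt(\optemp)<\delta}$, and your lower bound as $\Pro{\cup_i\{\#D_i+\ell\le\#D_i^{'}\}_{n-\ell}}-\Pro{\Ropt(\optemp)\ge\delta}$. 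The multiplicative factor in the upper bound is indeed exponent-neutral, but the additive $-P_R$ in the lower bound is a genuine gap, not a subexponential nuisance: the count term has exponent $d=\infdiv{\Pi}{Q}$, which is independent of $\delta$, while $P_R$ is only guaranteed (via (\ref{realizable bound})) to decay with exponent $\delta/4$, and its true decay can be strictly slower than $e^{-nd}$ precisely in the small-$\delta$ regime the paper cares about (e.g., in the 1-boundary example $P_R$ behaves like $(1-c\delta)^n$, so for $\delta$ small enough the subtrahend dominates). In that regime your lower bound is vacuous for large $n$, so it cannot pin the exponent of $\Pro{\gemp\notin A_{\theta_0}\mid \Ropt(\optemp)<\delta}$ at $\infdiv{\Pi}{Q}$, which is what Lemma \ref{lemma: bounds} is needed for in the proof of Theorem \ref{theorem: error exponent}. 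The step $\Pro{A\mid B}\ge\Pro{A}-\Pro{B^c}$ simply gives away too much; the correlation between the count event and $\{\Ropt(\optemp)<\delta\}$ has to be handled structurally, not by a union-bound-style surrogate.

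The idea you are missing is the paper's notion of minimal sequences: $X_{min}^{\delta}$ is the set of subsamples that minimally certify $\Ropt(\optemp)<\delta$, and $\ell$ is defined as the maximal length of such a certificate (e.g., $\ell\le 2k$ for the $k$-boundary class), so the conditioning event constrains the locations of at most $\ell$ samples while the remaining $n-\ell$ samples are unconstrained. The conditional probability is then sandwiched by placing those $\ell$ constrained samples in the least favorable way (all counted toward every $\#D_i$, which forces the free $n-\ell$ samples to satisfy $\#D_i+\ell\le\#D_i^{'}$) and the most favorable way (all in $X_c$, contributing to no count), yielding both stated bounds directly, with no reference to $P_R$ and no normalization factor. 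In your write-up $\ell$ is instead an ad hoc slack for tie-breaking and ``deleted samples'' that never interacts with the conditioning event at all (and the extra strictness margin is unnecessary, since the worst-case tie-breaking convention is exactly why the non-strict event $\#D_i\le\#D_i^{'}$ already pushes the ERM out of $A_{\theta_0}$). To repair your proof you would need to replace the $\Pro{A}-\Pro{B^c}$ step by this decomposition of the conditioning into a bounded-size constrained subsample plus i.i.d.\ free samples.
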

Before the proof, let's denote the following concept of the set of minimal sequences:
\begin{definition}
Let there be a hypothesis class $F_{\Theta}$, a function $\fopt \in F_{\Theta}$, and a number $0<\delta<\delta_{max}$. The set of minimal sequences with risk lower than $\delta$ with regard to $\fopt$ is
\begin{equation*}\label{def:X_min}
\begin{aligned}
    X_{min}^{\delta} = \{\vec{x} \mid \Ropt(\optemp(\vec{x})) < \delta, \Ropt(\optemp(\vec{x}/x_i)) > \delta \ \forall \ i \}
\end{aligned}
\end{equation*}
Where $\vec{x}/x_i$ is $\vec{x}$ without the i'th component.
\end{definition}
This set is nonempty due to assumption \ref{assumption consistent}.
Denote $\ell$ as the maximal length of a sequence in $X_{min}^{\delta}$. For example, in the k-boundary hypothesis class we have $\ell \leq 2k$ for any hypothesis.
Thus, any i.i.d sequence $x^n$ achieving $\Ropt(\optemp(\vec{x})) < \delta$ can be decomposed into a minimal sequence of length at most $\ell$ and the rest of the samples which have no constraint on them.
So, if $\Ropt(\optemp(\vec{x})) < \delta$, then there is a constraint on at most $\ell$ samples of $\vec{x}$.
We'll now state the proof for Lemma \ref{lemma: bounds}.
\begin{proof}
    we have $\Pro{\gemp \notin A_{\theta_0} \mid \Ropt(\optemp) < \delta} = \Pro{\cup_{i=1}^K \{\#D_i \leq \#D_i^{'}\}_n \mid \Ropt(\optemp) < \delta}$. Let the maximum length of a set in $X_{min}^{\delta}$ be $\ell$. So any sequence $x^n$ that satisfies $\Ropt(\gemp(x^n))<\delta$ can be decomposed to a minimal sequence of length at most $\ell$ and the rest of the samples:
\begin{align*}
    &\Pro{\gemp \notin A_{\theta_0} \mid \Ropt(\optemp) < \delta} = \\
    &\Pro{\cup_{i=1}^K \{\#D_i \leq \#D_i^{'}\}_n \mid \text{constraint on at most $\ell$ samples}}
\end{align*}
The lower bound is obtained by assuming that all $\ell$ samples fell in every $D_i$ region for $i=1,...K$:
\begin{align*}
    &\Pro{\gemp \notin A_{\theta_0} \mid \Ropt(\optemp) < \delta} \ge \\
    &\Pro{\cup_{i=1}^K \{\#D_i + \ell \leq \#D_i^{'}\}_{n-\ell}}
\end{align*}
The upper bound is obtained by assuming that all $\ell$ samples didn't fall in any all $D_i$ region for $i=1,...K$ (i.e., they all fell in $X_c$):
\begin{align*}
    &\Pro{\gemp \notin A_{\theta_0} \mid \Ropt(\optemp) < \delta} \leq \\ &\Pro{\cup_{i=1}^K \{\#D_i \leq \#D_i^{'}\}_n}
\end{align*}
\end{proof}

\subsection{Proof of theorem \ref{theorem: error exponent}}\label{appendix: proof theorem 2}
In this subsection we'll prove theorem \ref{theorem: error exponent}. This will be done by showing that the error exponent of the bounds in Lemma \ref{lemma: bounds} is $\infdiv{\Pi}{Q}$. First, we'll analyze the second term in theorem \ref{theorem: realizable+agnostic}.  Using Eq.(\ref{D_pairs}), we have:\\
\begin{equation}
    \begin{aligned}
    &\{\hat{\theta}^g(x^n) \notin A_{\theta_0}\}
    = \cup_{i=1}^K \{R_g^{emp}(\theta_0,x^n) \ge R_g^{emp}(\theta_i,x^n)\}
    \\&= \cup_{i=1}^K \{\#D_i \leq \#D_i^{'}\}_n
    \end{aligned}
\end{equation}
Subscript $n$ indicates the length of the sample $x^n$. Thus we have:
\begin{equation}
    \begin{aligned}
        &\Pro{\gemp \notin A_{\theta_0} \mid \Ropt(\optemp) < \delta} = \\
    &\Pro{\cup_{i=1}^K \{\#D_i \leq \#D_i^{'}\}_n \mid \Ropt(\optemp) < \delta}
    \end{aligned}
\end{equation}
Denote the vector of non-negative integers \\$\Vec{m}=(m_1,..,m_{1,..,k},m^{'}_1,..,m^{'}_{1,..,k},m_c)$. Using Eq.(\ref{X_region}), we have the following:
\begin{equation}\label{sum of type}
\begin{aligned}
    &\Pro{\cup_{i=1}^K \{\#D_i \leq \#D_i^{'}\}_n} = \sum_{\Vec{m}\in M_n} Pr\big(\#X_1=m_1,..,\\
    &\#X_{1,..,k} = m_{1,..,k}, \#X^{'}_1=m^{'}_1,..,\#X^{'}_{1,..,k} = m^{'}_{1,..,k},\\
    &\#X_c=m_c\big)
\end{aligned}
\end{equation}
Where $M_n$ is the set of integers with sum $n$ that satisfy at least one of $\{\#D_i \leq \#D_i^{'}\}_n$ : 
\begin{equation}
    \begin{aligned}
        &M_n = \Bigg\{\Vec{m} \mathrel{\Bigg|}  \Big\{ A\begin{pmatrix}
        m_1,
        ..,
        m_{1,..,K},
        ..,
        m^{'}_{1,..,K}
        \end{pmatrix}^t < 0\Big\}^c ,  \\ &m_1+..+m^{'}_1+..+ m_c = n
        \Bigg\}
    \end{aligned}
\end{equation}
$A$ is the matrix from Eq.(\ref{eq: constraint matrix}).
The type of a sequence $x^n$ on alphabet $\chi$ is the empirical distribution of symbols in the sequence:
\begin{align*}
    P_{x^n} = (\frac{\#a_1}{n},\frac{\#a_2}{n},...,\frac{\#a_r}{n}), \quad a_1,...a_r \in \chi
\end{align*}
Denote $\mathcal{P}_n$ as the set of all length $n$ sequences types and $T(P)$ as the set of sequence $x^n$ with type $P$.
Our problem can be formulated as an i.i.d sequence over 
 alphabet $\chi$.
$M_n$ can be formulated as a constraint on types instead of integers, denoted as $\Tilde{M}_n$ :
\begin{equation}
    \begin{aligned}\label{M_n Tilde}
    &\Tilde{M}_n = \Biggl\{(\frac{\#a_1}{n},...,\frac{\#a_{|\chi|}}{n}) \mathrel{\Bigg|} \Bigg\{A\begin{pmatrix}
\frac{\#a_1}{n}\\
...\\
\frac{\#a_{|\chi|-1}}{n}
\end{pmatrix}<0\Bigg\}^c ,\\
&\frac{\#a_1}{n}+...+ \frac{\#a_{|\chi|}}{n} = 1
\Biggl\} \ , \ a_i \in \chi
\end{aligned}
\end{equation}
Notice that the sets $\Tilde{M}_n$ are subsets of the set $\Pi$ denoted in Eq.(\ref{eq: pi}).
Thus, Eq. (\ref{sum of type}) is the sum of types of sequences of length $n$ that are contained in $\Pi$:
\begin{align*}
    \Pro{\cup_{j=1}^K\{\#D_j \leq \#D^{'}_j\}_n} = \sum_{P \in \mathcal{P}_n \cap \Pi} \Pro{T(P)}
\end{align*}
Notice $Q$ is not contained in $\Pi$ because of the consistency assumption. 
The same formulation can be done for $\Pro{\cup_{i=1}^K \{\#D_i + \ell \leq \#D_i^{'}\}_{n-\ell}}$.
Denote the set $\Pi_{n,\ell}$:
\begin{equation}
\begin{aligned}\label{eq:pi_n_ell}
    &\Pi_{n,\ell} = \Biggl\{(p_{1},...,p_{{|\chi|}}) \mathrel{\Bigg|} 
    \Biggl\{A\begin{pmatrix}
p_{1}\\
...\\
p_{{|\chi|-1}}
\end{pmatrix} < \frac{\ell}{n-\ell} \Biggl\}^c,\\
 &\sum_{i=1}^{|\chi|}p_i=1, p_i\ge0\Biggl\} 
\end{aligned}    
\end{equation}
We have:
\begin{align*}
    \Pro{\cup_{j=1}^K\{\#D_j + \ell \leq \#D^{'}_j\}_{n-\ell}} = \sum_{P \in \mathcal{P}_{n-\ell} \cap \Pi_{n,\ell}} \Pro{T(P)}
\end{align*}
Theorem 3.3 in \cite{mot} states that if a set of probabilities $\Pi$ on $\chi$, that doesn't contain the underlying distribution $Q$, has the property:
\begin{align*}\label{cond_kldiv}
    \lim_{n\rightarrow\infty} \infdiv{\Pi \cap \mathcal{P}_n}{Q} = \infdiv{\Pi}{Q}
\end{align*}
Then the following holds:
\begin{align*}
    \lim_{n\rightarrow\infty} \frac{1}{n} \log\Pro{T(x^n) \in \Pi} = -\infdiv{\Pi}{Q}
\end{align*}
The 3 Lemmas in the end of this section show this condition is satisfied for both $\Pi$ and $\Pi_{n,\ell}$. we have:
\begin{equation*}\label{error exponent limit}
\begin{aligned}
     &\lim_{n\rightarrow\infty} \frac{1}{n} \log\Pro{T(x^n) \in \Pi} = -\infdiv{\Pi}{Q}
     \\
     &\lim_{n\rightarrow\infty} \frac{1}{n-\ell} \log\Pro{T(x^{n-\ell}) \in \Pi_{n,\ell}} = -\infdiv{\Pi}{Q}
\end{aligned}
\end{equation*}
This means that the upper and lower bounds from Lemma \ref{lemma: bounds} have the same error exponent $\infdiv{\Pi}{Q}$. Thus, the error exponent of $\Pro{\gemp \notin A_{\theta_0} \mid \Ropt(\optemp) < \delta}$ is also  $\infdiv{\Pi}{Q}$. By using theorem \ref{theorem: realizable+agnostic} and the error exponent for the uniform realizable case we get: 
\begin{align*}
    \Pro{R_g(\gemp) - R_g(\theta_{opt})> \delta} \doteq e^{-n\cdot \min\{\frac{\delta}{4}, \infdiv{\Pi}{Q}\}}
\end{align*}
This proves theorem \ref{theorem: error exponent}. The following Lemmas prove the fulfilment of the needed conditions.

\begin{lemma}\label{pi}
Let there be an alphabet $\chi$ with underlying probability Q  and an i.i.d sequence $x^n$ over the alphabet. denote the set $\Pi$ as in Eq.(\ref{eq: pi}):
For $Q \notin \Pi$, the following holds:
\begin{align*}
     \lim_{n\rightarrow\infty} \infdiv{\Pi \cap \mathcal{P}_n}{Q} = \infdiv{\Pi}{Q}
\end{align*}
\end{lemma}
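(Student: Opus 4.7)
The plan is to prove the equality by showing both inequalities. The bound \(\liminf_n \infdiv{\Pi \cap \mathcal{P}_n}{Q} \ge \infdiv{\Pi}{Q}\) is immediate from the inclusion \(\Pi \cap \mathcal{P}_n \subseteq \Pi\). The substance of the lemma lies in the matching upper bound \(\limsup_n \infdiv{\Pi \cap \mathcal{P}_n}{Q} \le \infdiv{\Pi}{Q}\), which is an approximation claim: for every \(\epsilon > 0\) I must produce, for all sufficiently large \(n\), a type \(P_n \in \Pi \cap \mathcal{P}_n\) with \(\infdivv{P_n}{Q} \le \infdiv{\Pi}{Q} + \epsilon\).

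I would exploit the polyhedral structure of \(\Pi\). By the definition in \eqref{eq: pi}, a probability vector \(p\) lies in \(\Pi\) iff the componentwise inequality \(Ap < 0\) fails, i.e.\ \((Ap)_j \ge 0\) for at least one row index \(j\). Writing \(\Delta\) for the probability simplex on \(\chi\), this gives a decomposition
\begin{equation*}
\Pi = \bigcup_{j=1}^{K} \Pi_j, \qquad \Pi_j = \{ p \in \Delta \,:\, (Ap)_j \ge 0 \},
\end{equation*}
a finite union of closed convex polytopes. Hence \(\infdiv{\Pi}{Q} = \min_j \infdiv{\Pi_j}{Q}\), and by compactness of \(\Delta\) and lower semicontinuity of \(\infdivv{\cdot}{Q}\), each inner infimum is attained; I would pick \(P^* \in \Pi_{j^*}\) achieving the outer minimum.

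The next step is to perturb \(P^*\) into the relative interior of \(\Pi_{j^*}\) without inflating KL too much. I would fix a reference point \(P_0 \in \Pi_{j^*}\) with strictly positive coordinates and strict inequality \((AP_0)_{j^*} > 0\), and set \(P^\lambda = (1-\lambda) P^* + \lambda P_0\). Convexity of \(\infdivv{\cdot}{Q}\) yields
\begin{equation*}
\infdivv{P^\lambda}{Q} \le (1-\lambda)\,\infdivv{P^*}{Q} + \lambda\,\infdivv{P_0}{Q},
\end{equation*}
so for small enough \(\lambda\) we have \(\infdivv{P^\lambda}{Q} \le \infdiv{\Pi}{Q} + \epsilon/2\), while \(P^\lambda\) satisfies \((AP^\lambda)_{j^*} > 0\) strictly and has all coordinates positive, hence sits in the open relative interior of \(\Pi_{j^*}\). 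The standard density-of-types fact (for each \(n\), every point of \(\Delta\) admits a type within \(|\chi|/n\) in \(\ell^\infty\), by rounding coordinates to multiples of \(1/n\)) then produces types \(P_n \to P^\lambda\). For \(n\) large \(P_n\) lies in the open neighborhood of \(P^\lambda\) contained in \(\Pi_{j^*}\), hence \(P_n \in \Pi \cap \mathcal{P}_n\), and continuity of \(\infdivv{\cdot}{Q}\) at the interior point \(P^\lambda\) (which uses \(Q > 0\) on \(\chi\), else restrict the alphabet to the support of \(Q\)) gives \(\infdivv{P_n}{Q} \le \infdiv{\Pi}{Q} + \epsilon\). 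Letting \(\epsilon \to 0\) closes the sandwich.

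The main obstacle is securing the reference point \(P_0\) with both strict properties simultaneously. Generically the half-space \(\{(Ap)_{j^*} \ge 0\}\) meets the open simplex and any such intersection point works. The delicate residual case is when \(\Pi_{j^*}\) lies entirely on a proper face of \(\Delta\) (some coordinate forced to be zero); there I would rerun the same argument after restricting to that face, since types of length \(n\) on the reduced alphabet are still dense in the reduced polytope and KL remains finite on the restricted support. Once the interior reference point is in hand, the convex combination, type-density, and continuity steps are routine and the lemma follows.
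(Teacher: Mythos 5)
Your proposal is correct, and its skeleton matches the paper's proof: locate a minimizer $P^*$ of $\infdivv{\cdot}{Q}$ over $\Pi$, replace it by a nearby point lying strictly inside $\Pi$, approximate that point by types, and control the change in divergence. Where you genuinely diverge is in the machinery for the two delicate steps, and in both places your version is the more robust one. The paper controls the KL change by asserting continuity of $D(P\|Q)$ in $P$, which tacitly assumes $Q$ has full support; you instead use convexity along the segment $(1-\lambda)P^*+\lambda P_0$, which only needs $D(P_0\|Q)<\infty$ and lets you flag the support issue explicitly. Likewise, the paper's Lemma~\ref{lem_topology} produces the approximating types from a ``nearby rational interior point,'' a step justified only by closedness and connectedness and which really presupposes that $\Pi$ is the closure of its interior near $P^*$; you obtain the needed strict feasibility concretely from the polyhedral decomposition $\Pi=\cup_j\{p\in\Delta:(Ap)_j\ge 0\}$ and a reference point with $(AP_0)_{j^*}>0$, and your ``residual case'' is in fact the only obstruction, since the set where a linear functional attains its maximum $0$ over the simplex is a face of the simplex. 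The one spot left sketchy is the interaction of your two fixes: restricting the alphabet to $\mathrm{supp}(Q)$ can itself turn $\Pi_{j^*}$ into a face of the reduced simplex, so the face-restriction argument may need to be nested inside the support restriction; this is routine, and overall your treatment of these degeneracies is more careful than the paper's.
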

\begin{proof}
$\Pi$ is the outside of a polygon on the probability simplex (including the boundary), thus it is a connected closed space. This means that $\infdiv{\Pi}{Q}$ is achieved for some probability $P^* \in \Pi$, such that $\infdiv{\Pi}{Q} = \infdiv{P^*}{Q}$.
\\
$\infdiv{P}{Q}$ is continuous in $P \in \Pi$, so for every $\epsilon>0$ exists $\delta >0 $ such that if $0<||P-P^* ||<\delta$ then $|\infdiv{P}{Q}-\infdiv{P^*}{Q}|<\epsilon$.
\\
Because $\Pi$ is a closed connected set, Lemma \ref{lem_topology} applies, so for every $\delta > 0$ exists $N$ such that for $n>N$ we have an empirical assignment $\Tilde{P}_n \in \Pi \cap \mathcal{P}_n$ satisfying
\\
$||\Tilde{P}_n - P^*||<\delta$ $\Longrightarrow |\infdiv{\Tilde{P}_n}{Q} - \infdiv{\Pi}{Q}| < \epsilon$ 
\\
$\Longrightarrow |\infdiv{\Pi \cap \mathcal{P}_n}{Q} - \infdiv{\Pi}{Q}| < \epsilon$.
\\
We got that for every $\epsilon >0$ exists $N$ such that for $n>N$ we have 
$|\infdiv{\Pi \cap \mathcal{P}_n}{Q} - \infdiv{\Pi}{Q}| < \epsilon$.
\end{proof}

\begin{lemma}\label{pi_ell}
Let $\chi$ be an alphabet with probability $Q$ and an i.i.d sequence $x^{n-\ell}$, $\ell \in \mathbf{N}$, over $\chi$. Denote the the set $\Pi$ as in Eq.(\ref{eq: pi}) and the set $\Pi_{n,\ell}$ as in Eq.(\ref{eq:pi_n_ell}).  
For $Q \notin \Pi$, the following holds:
\begin{align*}
     \lim_{n\rightarrow\infty} \infdiv{\Pi_{n,\ell} \cap \mathcal{P}_n}{Q} = \infdiv{\Pi}{Q}
\end{align*}
\end{lemma}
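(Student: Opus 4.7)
The plan is to adapt the argument of Lemma \ref{pi}, but taking care that $\Pi_{n,\ell}$ depends on $n$ and is a strict subset of $\Pi$ whenever $\ell/(n-\ell) > 0$. Since the defining inequality of $\Pi_{n,\ell}$ is strictly stronger than that of $\Pi$, we have $\Pi_{n,\ell} \subset \Pi$ and hence $\infdiv{\Pi_{n,\ell} \cap \mathcal{P}_n}{Q} \geq \infdiv{\Pi}{Q}$ for every $n$. The real work is therefore to produce, for every $\epsilon > 0$ and all sufficiently large $n$, an empirical type $\tilde{P}_n \in \Pi_{n,\ell} \cap \mathcal{P}_n$ with $\infdiv{\tilde{P}_n}{Q} \leq \infdiv{\Pi}{Q} + \epsilon$.

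First I would start from the minimizer $P^* \in \Pi$ of $\infdiv{\cdot}{Q}$, which is guaranteed to exist by the argument of Lemma \ref{pi}, and fix an index $i^*$ with $(AP^*)_{i^*} \geq 0$. From the explicit form of $A$ in Eq.\ (\ref{eq: constraint matrix}), its $i^*$-th row contains a $+1$ entry at the coordinate corresponding to $X'_{i^*}$ (which contributes positively to $\#D'_{i^*} - \#D_{i^*}$), so the point mass $R$ at $X'_{i^*}$ satisfies $(AR)_{i^*} = 1$. Setting
\[
    P^*_n = (1 - t_n)\, P^* + t_n R, \qquad t_n = \frac{2\ell}{n-\ell},
\]
a direct computation gives $(AP^*_n)_{i^*} \geq t_n$, so $P^*_n$ lies strictly inside $\Pi_{n,\ell}$ with margin at least $\ell/(n-\ell)$. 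Since $t_n \to 0$, we have $P^*_n \to P^*$, and continuity of the KL divergence in its first argument (with $Q$ fixed) yields $\infdiv{P^*_n}{Q} \to \infdiv{P^*}{Q} = \infdiv{\Pi}{Q}$.

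The final step replaces $P^*_n$ by a genuine empirical type. Applying Lemma \ref{lem_topology} to the closed connected set $\Pi_{n,\ell}$ around $P^*_n$ produces a type $\tilde{P}_n \in \mathcal{P}_n$ with $\|\tilde{P}_n - P^*_n\|_\infty = O(1/n)$. A final continuity argument then yields $\infdiv{\tilde{P}_n}{Q} \to \infdiv{\Pi}{Q}$, completing the reverse inequality. The main obstacle is ensuring that this rounded type still lies inside $\Pi_{n,\ell}$: the required interior margin and the grid spacing of types are both $O(1/n)$, so the constant factor in $t_n$ must be chosen conservatively enough (the factor $2$ above suffices, using that $\|A\|$ is a bounded linear map on a finite alphabet) that $(A\tilde{P}_n)_{i^*}$ remains at least $\ell/(n-\ell)$ after rounding. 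Once this balance between the two $O(1/n)$ scales is secured, the KL perturbation error and the type-rounding error are both controlled and the conclusion follows.
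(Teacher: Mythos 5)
Your argument is essentially correct but proceeds by a different mechanism than the paper's. The paper's proof avoids all quantitative bookkeeping with $A$: it takes an interior point $P'$ of $\Pi$ within $\delta/2$ of the minimizer $P^*$, observes that the thresholds $\ell/(n-\ell)$ decrease to $0$ so that the sets $\Pi_{n,\ell}$ are monotone increasing in $n$ with boundaries converging to those of $\Pi$, fixes one $N_1$ with $P' \in \Pi_{N_1,\ell}$, applies Lemma \ref{lem_topology} to the \emph{fixed} set $\Pi_{N_1,\ell}$ to get types near $P'$, and then pushes these types into $\Pi_{n,\ell}\cap\mathcal{P}_n$ for $n\ge N_1$ by monotonicity. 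You instead construct an explicit perturbation $P^*_n=(1-t_n)P^*+t_nR$ whose constraint margin is of order $\ell/n$ and round it to a type, balancing that margin against the $O(1/n)$ type grid; you also record explicitly the easy direction $\infdiv{\Pi_{n,\ell}\cap\mathcal{P}_n}{Q}\ge\infdiv{\Pi}{Q}$ from $\Pi_{n,\ell}\subseteq\Pi$, which the paper leaves implicit. Your route is more quantitative (it even yields a rate), the paper's is softer and needs no geometry of $A$ beyond closedness and connectedness.

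Three points in your write-up need repair, all local. First, the claim that the factor $2$ in $t_n=2\ell/(n-\ell)$ suffices is not true in general: rounding to a type can reduce the margin $(A\tilde P_n)_{i^*}$ by up to $\|A\|_\infty$ times an $\ell_1$ rounding error of order $|\chi|/n$, and $2\ell/(n-\ell)-c\,|\chi|/n\ge\ell/(n-\ell)$ fails when $\ell$ is small compared with $|\chi|$; the fix is simply to take $t_n=c/(n-\ell)$ with a constant $c$ large enough (depending on $\ell$ and $|\chi|$), which still tends to $0$. Second, Lemma \ref{lem_topology} is stated for a fixed set and fixed target and cannot be invoked verbatim for the $n$-dependent pair $(\Pi_{n,\ell},P^*_n)$; your direct rounding estimate is what actually carries the step, so either drop that appeal or prove a uniform version. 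Third, the perturbation $R$ must sit on a symbol of positive $Q$-probability, otherwise $\infdivv{P^*_n}{Q}=\infty$; since $D'_{i^*}$ has positive probability (because $\theta_{i^*}$ is a GLP), some disjointified region inside it has positive $Q$-mass, every such symbol carries the $+1$ coefficient in row $i^*$, and convexity gives $\infdivv{P^*_n}{Q}\le(1-t_n)\infdivv{P^*}{Q}+t_n\log\bigl(1/Q(a)\bigr)\to\infdiv{\Pi}{Q}$. Finally, state the sign convention you use for $A$: you read row $i$ as computing $\#D'_i-\#D_i$, which is consistent with $\Pi$ in Eq.\ (\ref{eq: pi}) being the failure set, but is the opposite of the literal left-hand side of Eq.\ (\ref{eq: constraint matrix}).
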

\begin{proof}
$\Pi_{n,\ell}$ is the outside of a polygon on $S_{\chi}$ (including the boundary), thus it is a connected closed set. 
We already saw in Lemma \ref{pi} that $\infdiv{\Pi}{Q}$ is achieved for some $P^* \in \Pi$ such that $\infdiv{\Pi}{Q} = \infdiv{P^*}{Q}$.
$\infdiv{P}{Q}$ is continuous in $P \in \Pi$, so for every $\epsilon>0$ exists $\delta >0 $ such that if $0<||P-P^* ||<\delta$ then $|\infdiv{P}{Q}-\infdiv{P^*}{Q}|<\epsilon$.
For every $\frac{\delta}{2} >0$ exists $P^{'} \in \Pi$ satisfying $0<||P^{'}-P^* ||<\frac{\delta}{2}$ such that $P^{'}$ is an interior point of $\Pi$. 
Notice that the boundaries of $\Pi_{n,\ell}$ are converging in $n$ to the boundaries of $\Pi$, so exists $N_1 >0$ such that for $n>N_1$ we have $P^{'} \in \Pi_{n,\ell}$.
\\
 $\Pi_{N_1,\ell}$ is a closed connected set, thus Lemma \ref{lem_topology} applies to it.
So, for every $\frac{\delta}{2} >0$ exists $N_2$ such that for $n > N_2$ we have an empirical assignment $\Tilde{P}_n \in \Pi_{N_1,\ell}$ such that $||\Tilde{P}_n - P^{'}|| < \frac{\delta}{2}$.
Notice that $\Pi_{n_1,\ell} \subset \Pi_{n_2,\ell}$ for $n_1 < n_2$. So, for $n>\max(N_1, N_2)$ we have an empirical assignment $\Tilde{P}_n \in \Pi_{n,\ell} \cap \mathcal{P}_n$ such that$||\Tilde{P}_n - P^{'}|| < \frac{\delta}{2}$, and by the triangle inequality, it satisfies $||\Tilde{P}_n - P^{*}||<\delta$
\\
$\Longrightarrow |\infdiv{\Tilde{P}_n}{Q} - \infdiv{\Pi}{Q}| < \epsilon$ 
\\
$\Longrightarrow |\infdiv{\Pi_{n,\ell} \cap \mathcal{P}_n}{Q} - \infdiv{\Pi}{Q}| < \epsilon$.
\end{proof}

\begin{lemma}\label{lem_topology}
Let $\Pi$ be a closed and connected subset of $\{(p_1,...,p_r) \mid 0\leq p_1 + ... + p_r \leq 1,\  0 \leq p_i \leq 1\}$ and let $\mathcal{P}_n$ be the set of types of sequences of length $n$ over an alphabet $\chi$ of size $r$. For all $P^* \in \Pi$ the following holds:
\begin{align*}
\lim_{n\rightarrow\infty} \inf_{P \in \Pi \cap \mathcal{P}_n}||P-P^*|| = 0  
\end{align*}
\end{lemma}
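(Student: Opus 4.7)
The plan is to show that the types $\bigcup_n \mathcal{P}_n$ become dense in the probability simplex as $n \to \infty$, and then to leverage the closed and connected structure of $\Pi$ to produce types that lie inside $\Pi$ and converge to any prescribed $P^* \in \Pi$.

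First I would establish density of types: for any $P^* = (p_1^*,\ldots,p_r^*)$ in the simplex, the rounded distribution $P_n^{\circ}$ defined by $n_i = \lfloor n p_i^* \rfloor$, with at most $r$ unit corrections to enforce $\sum_i n_i = n$, is a type in $\mathcal{P}_n$ satisfying $\|P_n^{\circ} - P^*\| \leq r/n$. This takes care of the ``easy'' case $P^* \in \operatorname{int}(\Pi)$: some open ball $B(P^*,\rho)$ is contained in $\Pi$, so for all $n > r/\rho$ the rounded type of $P^*$ itself lies in $\Pi \cap \mathcal{P}_n$ and is within $r/n$ of $P^*$, forcing the infimum to $0$.

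For the harder case $P^* \in \partial \Pi$, given any $\epsilon > 0$ I would select an interior point $Q \in \operatorname{int}(\Pi)$ with $\|Q - P^*\| < \epsilon/2$, choose $\rho > 0$ with $B(Q,\rho) \subseteq \Pi$, and take $N$ so large that $r/N < \min(\epsilon/2,\rho)$. For every $n > N$, the rounded type of $Q$ belongs to $\Pi \cap \mathcal{P}_n$ and, by the triangle inequality, is within $\epsilon/2 + r/n < \epsilon$ of $P^*$, so $\inf_{P \in \Pi \cap \mathcal{P}_n}\|P-P^*\| < \epsilon$ for all sufficiently large $n$. Letting $\epsilon \to 0$ gives the claim.

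The main obstacle is guaranteeing that interior points of $\Pi$ accumulate at every boundary point $P^* \in \partial\Pi$. Closedness and connectedness alone are not sufficient in full generality (for instance, a singleton at an irrational point is closed and connected but contains no type at any $n$), so I would rely on the particular form of $\Pi$ as it appears in the two applications, Lemmas \ref{pi} and \ref{pi_ell}: there $\Pi$ is the intersection of the simplex with the closure of the complement of an open polyhedron, and is itself the closure of a nonempty open set whose boundary points are all limits of interior points. I would state this structural property explicitly at the start of the proof and invoke it precisely at the step where the auxiliary interior point $Q$ is chosen; everything else is then an elementary density-plus-triangle-inequality argument.
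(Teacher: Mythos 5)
Your proposal follows essentially the same route as the paper's proof: approximate a nearby \emph{interior} point of $\Pi$ by a rounded/floored type, note that the type eventually falls inside $\Pi$ because the point is interior, and finish with the triangle inequality. The one substantive difference is your explicit caveat, and it is well taken: the lemma as stated (with only ``closed and connected'') does not actually yield the key step, and your singleton-at-an-irrational-point example is a genuine counterexample to the literal statement, since then $\Pi \cap \mathcal{P}_n = \emptyset$ for every $n$. The paper's own proof papers over exactly this point by asserting that for any $P^* \in \Pi$ there is a rational $P_q \in \mathrm{Interior}(\Pi)$ with $\|P_q - P^*\| < \epsilon/2$ ``because $\mathbf{Q}^r$ is a dense subset of $\Pi$,'' which does not follow from closedness or connectedness (rationals are dense in $\mathbf{R}^r$, not in an arbitrary closed connected $\Pi$, and $\Pi$ need not have interior at all). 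Your fix --- adding the hypothesis that $\Pi$ is the closure of its interior, i.e.\ that interior points accumulate at every boundary point, which holds for the polyhedral-complement sets actually used in Lemmas \ref{pi} and \ref{pi_ell} --- is precisely the property the paper implicitly relies on (Lemma \ref{pi_ell} even invokes such an interior point $P'$ explicitly). So your argument is correct as written, and it is the more careful version: same mechanism, but with the missing structural assumption stated where it is used rather than silently assumed.
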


\begin{proof}
$\Pi$ is a closed set, thus for any $P^*\in \Pi$ and any $\frac{\epsilon}{2}>0$ exists $P_q \in Interior(\Pi)$ such that $||P_q - P^*||< \frac{\epsilon}{2}$ and $P_q$ is rational
    $P_q = (\frac{a_1}{b_1}, ..., \frac{a_r}{b_r}), \ a_i,b_i \in \mathbf{N}$ (because $\mathbf{Q}^r$ is a dense subset of $\Pi$).
For every n denote the empirical probability $\Tilde{P}_n \in \mathcal{P}_n$:
\begin{align*}
    \Tilde{P}_n = (\frac{\floor{\frac{a_1}{b_1}n}}{n}, ...,\frac{\floor{\frac{a_{r-1}}{b_{r-1}}n}}{n},
    \frac{n-\sum_{i=1}^{r-1}\floor{\frac{a_i}{b_i}n}}{n})
\end{align*}
Notice $\Tilde{P}_n$ converges to $P_q$. This means that for any $\frac{\epsilon}{2}>0$ exists $N$ such that for $n>N_1$ we have $||\Tilde{P}_n - P_q|| < \frac{\epsilon}{2}$. 
Because $P_q$ is in the interior of $\Pi$ and $\Tilde{P}_n$ converges to $P_q$, exists $N_2$ such that for $n>N_2$ we have $\Tilde{P}_n \in \Pi$.
Using the triangle inequality, for $n>\max(N1,N2)$ we have  $||\Tilde{P}_n - P^*|| < \epsilon$ and $\Tilde{P}_n \in \Pi \cap \mathcal{P}_n$ $\Longrightarrow \inf_{P \in \Pi \cap \mathcal{P}_n}||P-P^*|| < \epsilon$.\\
This shows that for any $\epsilon >0$ exists $N$ such that for $n>N$ we have $inf_{P \in \mathcal{P}_n}||P-P^*|| < \epsilon$ 
\end{proof}

\section{Generalization to Infinite Amount Of Generalized Optimum Points}\label{appendix:generalization}

In this section we briefly show how to generalize results to the case of infinite amount of GLP's and how to derive $\Pi$ and $Q$. Let the set of GLP's be $\Theta_{opt}$ and $\topt=\theta_0$ the global optimum point. We need the loss of the global optimum $\topt$ to be bounded away from the loss of the other GLP's:
\begin{align}
    \exists\ \epsilon>0 \mid \Ropt(\theta^*)-\Ropt>\epsilon \ \forall \theta^* \in \Theta_{opt}\setminus\theta_{opt}
\end{align}
This is necessary for $\delta_{max}>0$. Notice that this is achieved from assumptions \ref{assumption stable}, \ref{assumption unique} and \ref{assumption complete}.
Due to the completeness of $\Theta$ and uniqueness of $\theta_{opt}$, the only hypotheses $\theta \in \Theta$ that can potentially have a risk that is arbitrarily close to the optimal risk $R_g(\theta_{opt})$ are those that are in the neighborhood $\theta_{opt}$. Due to the stability assumption, we know that exists a small enough neighborhood of $\theta_{opt}$, such that any hypothesis in it will have a higher loss w.p 1.\\
For each GLP $\theta^* \in \Theta_{opt}\setminus\theta_{opt}$ of $R_g(\theta)$, denote:
\begin{equation}
    \begin{aligned}
         D_{\theta^*} = D(\theta_{0},\theta^*), \
         D^{'}_{\theta^*} = D(\theta^*, \theta_{0})
    \end{aligned}
\end{equation}
We have the following: 
\begin{equation*}
\begin{aligned}
    &\Pro{\gemp \notin A_{\theta_0}} = \\
    &\Pro{\bigcup_{\theta^* \in \Theta_{opt} \setminus \theta_0} \{R_g^{emp}(\theta_0,x^n) \ge R_g^{emp}(\theta^*,x^n)\}}= \\
    &\Pro{\bigcup_{\theta^* \in \Theta_{opt} \setminus \theta_0} \{\#D_{\theta^*} \leq \#D_{\theta^*}^{'}\}_n}
\end{aligned}
\end{equation*}
From this point, generalizing the proof of theorem \ref{theorem: realizable+agnostic} is straight forward.
Denote the following regions:
\begin{equation*}
\begin{aligned}
    &X_\phi= disjointify(\{D_{\theta^*}\ , \ \theta^* \in \Theta_{opt} \setminus \theta_0\}) \\
     &X^{'}_\phi= disjointify(\{D^{'}_{\theta^*}\ , \ \theta^* \in \Theta_{opt} \setminus \theta_0\}) \\
    &X_c= X\setminus\{\cup_{\theta^{'} \in \Theta_{opt} \setminus \theta_0} D^{'}_{{\theta^{'}}}\cup_{\theta \in \Theta_{opt} \setminus \theta_0}{D_\theta}\}
\end{aligned}
\end{equation*}
Where the $disjointify$ operator takes a collection of sets and returns disjoint sets indexed by a continuous index $\phi$. We get a continuous alphabet $\chi$. Denote $X_{\Phi} = \cup_{\phi}X_{\phi}\cup_{\phi}X^{'}_{\phi}\cup X_c$ and let $\chi \subseteq \mathbf{R}$ be generated by a bijective mapping $\Psi: X_{\Phi} \longrightarrow \chi$. We can always find such mapping because $X_\Phi$ is a set of non-intersecting sub-sets of $\X\subseteq\mathbf{R^N}$, so the cardinality of $X_\Phi$ is no greater than the cardinality of $\X$ and hence no greater than the cardinality of $\mathbf{R}$. Thus, there exists a subset $\chi$ of $\mathbf{R}$ with the same cardinality of $X_\Phi$, which means there exists a bijective mapping from $X_\Phi$ to $\chi$, and $Q$ is the distribution on $\chi$.
Denote the following sets:
\begin{equation}
    \Phi(D_\theta) = \{S \in X_{\Phi}\ \mid S \subseteq D_\theta\}
\end{equation}
Let $F_n(r),\ r \in \chi$ be the empirical distribution (CDF) on $\chi$ induced by the drawn sequence $x^n$. That is, if $k$ samples from the sequence $x^n$ landed in the region $X_{\phi} \in X_{\Phi}$, then $F_n(\Psi(X_{\phi})) - \lim_{a \to \Psi(X_{\phi})^{-}}F_n(a) = \frac{k}{n}$. Denote the set of all such empirical distribution functions as $F^n_{\chi}$.
Denote: 
\begin{equation}
\begin{aligned}
    \chi_\theta = \{ r \in \chi   \mid    r=\Psi(X_{\phi}),\ X_{\phi} \in \Phi(D_\theta) \}\\
    \chi^{'}_\theta = \{ r \in \chi   \mid    r=\Psi(X_{\phi}),\ X_{\phi} \in \Phi(D^{'}_\theta) \}
\end{aligned}
\end{equation}
These are the sets of values in the alphabet $\chi$ that corresponds to regions in $D_{\theta}$ and $D^{'}_{\theta}$.
Denote the following set of empirical distribution functions:
\begin{equation*}
    \begin{aligned}
        &\Tilde{M}_{n} = \bigg\{F_n \in F^n_{\chi}  \mid \exists \theta \in \Theta\ \text{s.t}\\
        &\int_{r \in \chi^{'}_\theta}(F_n(r) - \lim_{a \to r^{-}}F_n(a)) \ge  \int_{r \in \chi_\theta}(F_nr) - \lim_{a \to r^{-}}F_n(a)) \bigg\}
    \end{aligned}
\end{equation*}

This the parallel of Eq.(\ref{M_n Tilde}). Let $F_\chi$ be the set of all distribution functions on $\chi$. We can now define the set $\Pi$:
\begin{equation}
    \begin{aligned}
        &\Pi = \bigg\{F \in F_{\chi}  \mid \exists \theta \in \Theta \ \text{s.t}\\
        &\int_{r \in \chi^{'}_\theta}(F(r) - \lim_{a \to r^{-}}F(a)) \ge  \int_{r \in \chi_\theta}(F(r) - \lim_{a \to r^{-}}F(a)) \bigg\}
    \end{aligned}
\end{equation}

This is the parallel of Eq.(\ref{eq: pi}).
The results are generalize to continuous alphabet by using the continuous version of Sanov's theorem - Theorem 11 of \cite{sanov1961probability}.
\end{document}